\title[Computation with Sequences in the Brain]{Computation with Sequences of Assemblies in a Model of the Brain}
\newcommand{\R}{\mathbb{R}}
\newcommand{\E}{\mathbb{E}}
\newcommand{\1}{\mathbbm{1}}
\newcommand{\cond}{\, | \,}
\begin{document}

\maketitle

\begin{abstract}%
Even as machine learning exceeds human-level performance on many applications, the generality, robustness, and rapidity of the brain's learning capabilities remain unmatched. How cognition arises from neural activity is {\em the} central open question in neuroscience, inextricable from the study of intelligence itself. A simple formal model of neural activity was proposed in \citet{papadimitriou2020brain} and has been subsequently shown, through both mathematical proofs and simulations, to be capable of implementing certain basic cognitive operations via the creation and manipulation of assemblies of neurons. However, many intelligent behaviors rely on the ability to recognize, store, and manipulate temporal {\em sequences} of stimuli and memories (planning, language, navigation, to list a few). Here we show that, in the same model, the precedence of time can be captured naturally through synaptic weights and plasticity, and, as a result, a range of computations on {\em sequences} of assemblies can be carried out.  In particular, repeated presentation of a sequence of stimuli leads to the memorization of the sequence through corresponding neural assemblies: Upon future presentation of any stimulus in the sequence, the corresponding assembly and its subsequent ones will be activated, one after the other, until the end of the sequence.  If the stimulus sequence is presented to two brain areas simultaneously, a scaffolded representation is created, resulting in more efficient memorization and recall, in agreement with cognitive experiments. Finally, we show that any finite state machine can be learned in a similar way, through the presentation of appropriate patterns of sequences.  Through an extension of this mechanism, the model can be shown to be capable of universal computation. We support our analysis with a number of experiments to probe the limits of learning in this model in key ways. Taken together, these results provide a concrete hypothesis for the basis of the brain's remarkable abilities to compute and learn, with sequences playing a vital role.
\end{abstract}

\begin{keywords}%
  assemblies, neural network, neuroscience, plasticity, sequence learning, finite state machine
\end{keywords}

\def\name{{\sc nemo}}

\section{Introduction}

How does the activity of individual neurons and synapses lead to higher-level cognitive functions? This is a central mystery in neuroscience which currently lacks an overarching theory. In \citet{papadimitriou2020brain} a mathematical model of the brain was proposed in an attempt at such a theory.  This neural model --- which we call \name\ in this paper --- entails brain areas, spiking neurons, random synapses, local inhibition, and plasticity (see the next subsection for a detailed description of \name).   The dynamical system defined by \name\ has certain emergent attractors corresponding to {\em assemblies of neurons;} recall that an assembly is a stable set of highly interconnected neurons in an area, representing through their (near) simultaneous excitation a real-world object, episode, or idea \citep{hebb1949organization, harris2003organization, buzsaki2019brain}.  There is a growing consensus in neuroscience that assemblies of neurons play an important role in the way the brain works \citep{buzsaki2010neural, huyck2013review, yuste2015neuron, eichenbaum2018barlow}.  It was established in \citet{papadimitriou2020brain} and subsequent research, through both mathematics and simulation, that certain elementary behaviors of assemblies arise in \name:  projection, association, merge, among others. Moreover, through \name\  one can implement certain reasonably complex cognitive phenomena, including learning to classify well-separated classes of stimuli \citep{dabagia2022assemblies}, and parsing natural language sentences \citep{mitropolsky2021biologically}.

Many of the brain's remarkable capabilities rely on working with {\em sequences} (of stimuli, words, places, etc), with the human brain's acumen for language being a particularly striking example. The capacity to memorize sequences is widely attested in cognitive neuroscience \citep{sugar2019episodic, bellmund2020sequence}. Experiments have documented the creation and activation of assemblies in sequence in the animal brain after training on tasks that involve sequential decisions \citep{dragoi2006temporal, pastalkova2008internally, dragoi2011preplay}. \citet{ikegaya2004synfire} observed that precisely-timed patterns of activation across large groups of neurons in the mouse neocortex are frequently repeated, suggesting memorization, and moreover that these patterns are combined into higher-order sequences (i.e., sequences of sequences). In the hippocampi of rats performing a sequential decision-making task, \citet{pastalkova2008internally} identified assembly sequences which predicted the decisions made. Across both navigational and memory tasks, the particular sequence which was exhibited depended closely on initial conditions and the structure of the task instance. In a further investigation by \citet{dragoi2011preplay}, sequences of neurons that were observed during a novel experience also occurred in spontaneous activity during rest before the experience was initiated again, a phenomenon known as ``preplay''. This finding suggests that the structure of sequence representations is largely determined by the intrinsic connectivity of the network.  

Arguably, it is through sequences of stimuli and their representations that brains deal with the all-important concept of {\em time}.  The question arises: Can \name\  capture this capability of the animal brain?   In past work, \name\  did not have to deal explicitly with sequences or time. In the English parser implemented in \cite{mitropolsky2021biologically}, the input sentence is presented sequentially, and the order of its words is not memorized by the device.  In subsequent work on parsing \cite{mitropolsky2022center}, the need to memorize subsequences of the input language became apparent in connection to the {\em center embedding} of sentences; however, no mechanism for this memorization was proposed.

In this paper, we demonstrate the emergent formation of sequences of assemblies in \name. When a brain area is stimulated by the same sequence of stimuli a handful of times, assemblies are reliably created, and the entire sequence will subsequently be recalled when only the beginning of the stimulus sequence is presented. Importantly, the underlying mechanism involves the capture of time precedence between sequences through the establishment, via plasticity, of high synaptic weights between stimuli representations, in the direction of time.  Moreover, we demonstrate that involving additional brain areas during presentation (essentially forming a ``scaffold'' of interconnected assemblies) makes memorization faster and more robust, and more so if this new area already contains another memorized sequence. This provides theoretical support to experience: It is easier to memorize a sequence of stimuli when each stimulus is mentally associated by the subject with an element of an already memorized sequence --- for example, a familiar tune, or the sequence of buildings next to the subject's home.

We use these ideas to further show that, in \name, assemblies can be configured to simulate an arbitrary {\em finite state machine} (FSM, or {\em finite state automaton}). Recall that FSMs are simple computational devices capable of recognizing and generating the class of sequential patterns known as {\em regular languages} \citep{sipser1996introduction}.  Moreover, we show that this configuration can be learned quickly by presenting sequences of stimuli corresponding to state transitions; this captures the brain's ability to learn {\em algorithms} involving sequences. The implementation and learning of FSMs relies crucially on one last feature of \name, namely {\em long-range interneurons (LRIs).} These are neural populations extrinsic to the brain areas of \name, which can be recruited by assemblies in adjacent areas, and whose function is to inhibit or disinhibit remote brain areas to achieve synchrony and control of the computation \citep{sik1995hippocampal, jinno2007neuronal, zhang2014long}.  There is evidence in the literature \cite{roux2015tasks} that LRIs are essential for the onset of $\gamma$ oscillations, often considered coterminous with brain computation.

One interesting byproduct of the mechanism for implementing and learning FSMs is a simple demonstration that \name\ is {\em Turing complete}. In other words, \name\  with LRIs constitutes a hardware language capable of implementing any computation, within the constraints imposed by the parameters of the model.  This is rather significant for a mathematical model that has the ambition to capture a large part of human cognition.  The original exposition of \name\  in \cite{papadimitriou2020brain} did contain an argument of Turing-completeness as well; however, that proof relies on a biologically implausible computer-like program, with loops, conditional statements, and variables corresponding to assemblies.  The Turing completeness argument in the present paper is carried out strictly within \name, and the required program is implemented with LRIs, yielding an entirely hardware-based general computer, consistent with neurobiological principles. 

\subsection{The Neural Model}
We next describe in detail \name, the mathematical model of the brain we consider here. The essential ingredients are a weighted, directed, random graph, along with simple mechanisms for determining which neurons fire ($k$-winners-take-all) and adjusting the weights (Hebbian plasticity and homeostasis).

\paragraph{Brain areas.}  The {\em brain} consists of a finite number of \emph{brain areas}.  A brain area is a set of $n$ excitatory neurons, connected internally by directed random edges (synapses), each present independently with probability $p$. Two brain areas $A$ and $B$ may be connected to each with edges in one direction (e.g., $A$ to $B$) --- or possibly in both directions --- by a {\em fiber}.  Fibers are directed bipartite random graphs; that is, if there is a fiber from $A$ to $B$, then for each pair of excitatory neurons $(i,j)$ where $i$ is in $A$ and $j$ is in $B$, there is a synapse from $i$ to $j$ with probability $p$, independently of all other possible synapses. All synapses have {\em weights}, assumed to be initially one.  An {\em input} (or {\em sensory}) area is a set of excitatory neurons, subsets of which are activated by external stimuli (e.g., the output of a sensory pathway, or the memory system) and excite brain areas it is connected to. (Fibers only go from sensory areas to other brain areas, not vice versa.)

\paragraph{The dynamical system.} We assume that computation proceeds in discrete time steps.  At each step, for each neuron its total synaptic input is determined by summing up the current weights of incoming connections from neighbors which fired on the previous time step. For each brain area, {\em only the $k$ neurons with the highest total input fire at each step} (with ties broken randomly).  This is an important part of the model, capturing local inhibition and the area's inhibitory/excitatory balance.  
Synapse weights, both within and between areas, are nonnegative and governed by Hebbian plasticity with parameter $\beta > 0$, so that each time $j$ fires immediately after $i$ fires, the weight of the synapse from $i$ to $j$ increases by a multiplicative factor of $1 + \beta$. In mathematical notation, at time $t$, let $x_A(t)$ be the set of neurons in area $A$, $W_A(t)$ be the recurrent weight matrix for area $A$, $W_{B, A}(t)$ the weight matrix from area $B$ to area $A$, and $I_A(t)$ the inhibitory signal for area $A$. Then we have
\begin{align*}
        x_A(t+1) &\gets I_A(t) \cdot k\text{-cap}\left(\sum_{B} W_{B, A}(t) x_B(t) \right)\\
        W_A(t+1) &\gets W_A(t) + \beta \, (x_A(t+1) \, x_A(t)^\top) \odot W_A(t)\\
        W_{B, A}(t+1) &\gets W_{B,A}(t) + \beta \, (x_A(t+1) \, x_B(t)^\top) \odot W_{B, A}(t) 
\end{align*}
where $\odot$ is element-wise multiplication, and the function $k\text{-cap}$ maps a vector to the indicator vector of its $k$ largest components (with ties broken randomly). 

\paragraph{Long-range interneurons.}
We assume that any brain area $A$ can be inhibited --- that is, no excitatory neuron in $A$ can fire --- by the activation of a designated population of inhibitory neurons, $I_A$. The activity of $I_A$ can in turn be suppressed by the firing of a group of {\em disinhibitory} neurons $D_A$, which receive input from other brain areas. We assume that these two types of interneurons integrate much more quickly than excitatory neurons --- as is generally accepted \citep{cruikshank2007synaptic, zhou1998ampa} --- and as a result these inhibition and disinhibition actions can be thought of happening instantaneously. 

\subsection{Related work}


The \name\ model is distinguished from other general theories of neuronal coordination by its bottom-up approach to the problem: It consists only of tractable abstractions of well-understood biological mechanisms, which can be shown to yield interesting behavior.
Among other such theories, the neuroidal model presented by \citet{valiant2000circuits, valiant2000neuroidal, feldman2009experience} is powerful but demands neurons and synapses capable of arbitrary state changes, while attempts to translate the success of deep learning via gradient descent into the brain \citep{lillicrap2016random, sacramento2017dendritic, guerguiev2017towards, sacramento2018dendritic, whittington2019theories, lillicrap2020backpropagation} rely on hypothetical feedback connections and exceedingly precise coordination.

Specifically in regard to memorizing and reproducing sequences, there are several other approaches which vary in biological fidelity and corresponding limitations. \citet{eliasmith2012large} is a biologically-plausible model of the entire brain which exhibits sequence memorization and prediction (along with other more complex cognitive tasks), but the mechanisms for doing so are complex and engineered, rather than emerging from simple dynamics. The model of \citet{cui2016continuous} is able to memorize and predict sequences in much the same fashion that we present here, using Hebbian plasticity and $k$-winners-take-all to abstract inhibition, but requires more sophisticated compartmental neurons and columnar organization. Lastly, the Tolman-Eichenbaum machine \citep{whittington2020tolman} is an abstraction of the hippocampus which can memorize sequences (along with more general structured sets of stimuli), but its encoding and mechanisms are simply assumed and it is trained with gradient descent. In comparison with these, \name is distinguished by (i) the simplicity and biological fidelity of its mechanisms and (ii) the fact that its interesting behavior is emergent, with minimal ``engineering''. 

A model very closely related to \name\ and called {\em the assembly calculus (AC)} was described in \citet{papadimitriou2020brain}, where it was shown that it is Turing complete.  The simulation of a Turing machine was accomplished using what amounts to a stored {\em program} of control commands, entailing variables, conditional statements, and loops, including commands needed for inhibiting and disinhibiting brain areas. A key contribution of this paper is that we eliminate such programs and use biologically plausible LRIs with similar consequences. As a result, our simulation is entirely self-contained, and driven only by the presentation of stimuli --- the higher-level connectivity of brain areas and interneurons can implement the general architecture of a Turing machine (i.e. a long tape of symbols and a tape head which keeps track of the current state and reads and writes symbols on the tape), while the weights of connections between neurons store the actual symbols on the tape and the state transitions of the tape head. 

\paragraph{Sequences of assemblies in neuroscience.} There is a large body of experimental evidence from neuroscience indicating that neural representations of temporally-structured patterns of stimuli often preserve that structure. The assemblies hypothesis goes a step further: these neural representations ought not to arise only in response to the sequence of stimuli, but should also be generated internally by the network, for example by playing out the entire sequence of neural responses in response to only a fragment of the entire pattern of stimuli. Here we recount certain inspiring experimental results which support this view. In \citet{dragoi2006temporal}, neurons in the hippocampi of rats running a linear maze were recorded, and the pairwise temporal correlations of their spikes were computed. These correlations were significantly larger than predicted by the model where neurons simply fire in response to where the animal currently is, suggesting that neurons which fire in response to one location of the maze become linked with those in nearby locations. \citet{pastalkova2008internally} trained rats to alternate between two paths of a maze, with a delay period between each run, while again recording from the hippocampus throughout. They found that the activity during the delay period was a strong predictor of which path the rat would take, even on error runs (where the animal failed to alternate between paths correctly). As the environmental context was identical during these delay periods, this points to internal generation of these neural responses. Finally, \citet{dragoi2011preplay} recorded from the hippocampi of mice during periods of sleep and exploration of an unfamiliar maze, and found that the sequences of firing which occurred during exploration matched those that occurred during prior sleep periods. These sequences were active before the animal even saw the novel maze, which suggests that particular temporal patterns of neural activity arise spontaneously, and these patterns are later recruited to represent environmental stimuli. Although the sequences of firings which arise in our model need not occur spontaneously before the first presentation of the stimulus sequence, they do rely crucially on internal connectivity, and by repeated presentation of the stimulus sequence become linked together to reinforce the replay of the sequence.

\section{Computation with sequences of assemblies}
\subsection{Sequence memorization}

We begin with {\em sequence projection,} where a sequence of assemblies from one area is projected to another area. 
The most natural way to project a sequence from one area to another is to simply activate, in the first area, the assemblies for each element of the sequence, one after the other in the given order. Assuming that there is a fiber connecting the first area to the second, and the second area is disinhibited, one would hope that this would result in the creation of a set of corresponding assemblies in the target area, so that activating any newly created assembly results in the sequential activation of the rest of the sequence of new assemblies. This is the essence of our first finding, stated below with quantitative bounds on plasticity and the other parameters.

\begin{figure}
    \centering
    \includegraphics[width=0.3\linewidth]{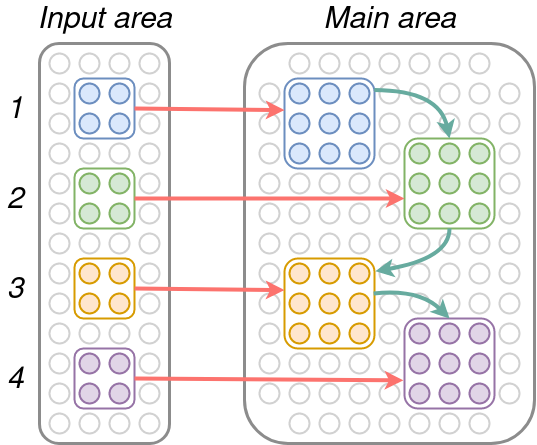}
    \hspace{0.1\linewidth}
    \includegraphics[width=0.52\linewidth]{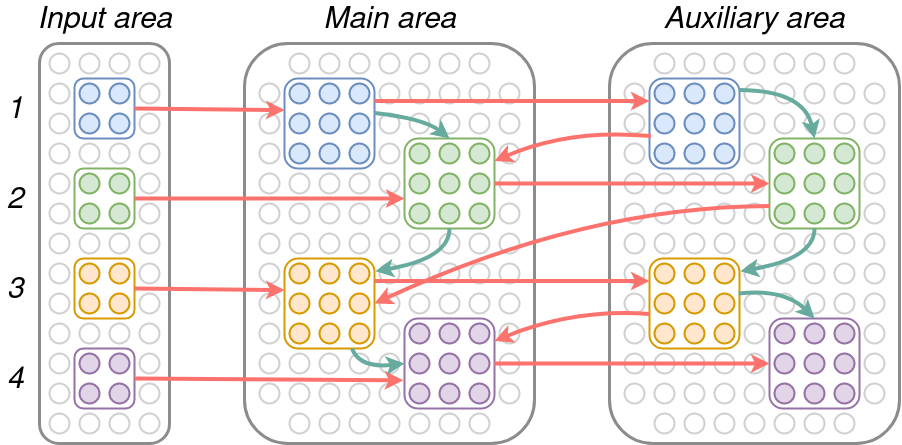}
    \caption{Left: Simple copy of a sequence of assemblies. When a sequence is played a few times in one area, a corresponding sequence of assemblies is formed in an adjacent area, and will subsequently be recalled when only the beginning of the sequence is presented in the input area. Right: When the area with the copied sequence is also allowed to send and receive input to/from an auxiliary area, a ``scaffolded'' sequence of assemblies is formed across the two areas, leading to faster, more reliable memorization.}
    \label{fig:seq_schematic}
\end{figure}

\begin{theorem}[\textbf{Simple sequence copy}] \label{thm:simple}
Suppose that area $A$ receives input from area $S$. Let $S_1,\ldots$, $S_L$ be a sequence of subsets of $k$ neurons from $S$, with $L \le n / 2k$, and $|S_{\sigma} \cap S_{\sigma'}| \le \Delta$ for all $\sigma \neq \sigma' \in \{1,2,\ldots,L\}$, and $\Delta$ a positive integer. Suppose this sequence is presented $T$ times (with each presentation beginning from rest) with area $A$ disinhibited, forming a sequence of caps $A_1(t), \ldots, A_L(t)$ in area $A$ on round $t \le T$. After each round, homeostasis is applied, so that each neuron's incoming weights sum to $1$. Then for 
\[
kp \ge 3 \ln n, \qquad \Delta \le \frac{k}{(2\ln n)^2}, \qquad \beta \le \frac{ \ln \frac{n}{2kL}}{2 (\max \{\Delta p, 6 \ln n\})^2 }, \qquad 
T \ge \frac{1}{\beta \ln (n/k)}
\]
when any $S_i$ fires once in the input area, and $A$ is allowed to fire $L-i+1$ times, the resulting sequence of caps $\widehat A_i, \ldots, \widehat A_L$, satisfies
\[\frac{\E[|\widehat A_j \cap A_j(1)|]}{k} \ge 1 - \left(\frac{k}{n}\right)^{2 \beta T}\]
for all $j \ge i$. In particular, for 
\[T  \ge \frac{1}{2\beta}\sqrt{\frac{\ln(nL)}{\ln (n/k)}}\]
w.h.p. we get perfect recall of the entire sequence after projection.
\end{theorem}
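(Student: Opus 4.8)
The plan is to prove the theorem in two stages. The substantive claim is the expected-overlap bound $\E[|\widehat A_j \cap A_j(1)|]/k \ge 1 - (k/n)^{2\beta T}$; the ``in particular'' statement (w.h.p.\ exact recall) then follows from it by a Markov/union-bound argument. For the first stage I would organize the argument around three pillars: (i) \emph{training stability} --- the caps $A_1(t),\dots,A_L(t)$ formed during the $T$ presentations are the same set on every round, so $A_j(t)=A_j(1)$ w.h.p.; (ii) \emph{weight amplification} --- after $T$ rounds the recurrent synapses from $A_{j-1}(1)$ to $A_j(1)$ dominate each target neuron's incoming weight; and (iii) \emph{recall concentration} --- running the dynamics from only $S_i$ reproduces each cap with the stated expected overlap, without errors compounding down the chain. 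For pillar (i) I would use the feedforward projection from $S_j$ as a stable anchor: when $S_j$ fires, a neuron $b\in A$ has total input equal to its feedforward input from $S_j$ plus its recurrent input from the preceding cap. Since $kp \ge 3\ln n$, the number of $S_j$-neighbors of each $b$ is sharply concentrated (Chernoff), so the top-$k$ selection on the \emph{first} round is well defined with a constant multiplicative gap between the $k$-th and $(k{+}1)$-th neuron, and $\Delta \le k/(2\ln n)^2$ ensures distinct stimuli address almost-disjoint subsets of $A$, preventing cross-interference. Plasticity then only reinforces already-selected neurons, and the smallness of $\beta$ keeps the recurrent term from ever overturning the feedforward ordering; a union bound over the $L$ positions and $T$ rounds yields $A_j(t)=A_j(1)$ throughout.

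For pillar (ii) the key observation is that homeostasis rescales \emph{all} of a neuron's incoming weights by a common factor and hence preserves the ratio between any two of its synapses; a synapse used on each of the $T$ rounds therefore ends at $(1+\beta)^T$ times an unused one. Tracking the fraction $f_t$ of $b$'s incoming weight sitting on the $A_{j-1}(1)\to b$ synapses gives the recurrence $f_{t+1} = f_t(1+\beta)/(1+\beta f_t)$ with $f_0 \approx k/n$, whose solution (equivalently, the odds $(1-f_t)/f_t$ decay like $(1+\beta)^{-t}$) drives $f_t$ toward $1$ geometrically. Combined with the in-degree concentration from $kp\ge 3\ln n$, this shows that during recall a correct neuron $b\in A_j(1)$ receives input from $\widehat A_{j-1}$ larger than that of any non-cap neuron (whose input is only its baseline share of order $k/n$) by a factor growing like $(1+\beta)^T$. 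This amplification factor is precisely the signal-to-noise ratio that plasticity has purchased, and it is what will appear in the failure exponent.

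For pillar (iii) I would induct on $j$, starting from the base case $\widehat A_i = k\text{-cap}$ of the trained feedforward input from $S_i$, which equals $A_i(1)$ by the round-one analysis (feedforward having been reinforced at least as strongly as the recurrent weights). For the inductive step, assume $\widehat A_{j-1}$ overlaps $A_{j-1}(1)$ in all but a small number of neurons. Since the correct neurons of $\widehat A_{j-1}$ already supply the amplified input of pillar (ii) to each $b\in A_j(1)$, a Chernoff bound on the $A_{j-1}(1)$-degree of $b$ shows that $b$ drops out of the cap only when its input falls below the largest baseline input among the $\approx n$ non-cap neurons --- a large-deviation event whose probability I would bound by $(k/n)^{2\beta T}$, with the base $k/n$ reflecting the ratio of baseline to amplified input and the exponent $2\beta T$ coming from the $(1+\beta)^{T}$ amplification of pillar (ii) entering the tail. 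The crucial point is that this failure probability is controlled by the \emph{correct} neurons of $\widehat A_{j-1}$ alone, so a handful of errors in $\widehat A_{j-1}$ cannot seed additional errors in $\widehat A_j$; the bound therefore does not degrade with $j$, and taking expectations gives $\E[|\widehat A_j \cap A_j(1)|]/k \ge 1-(k/n)^{2\beta T}$ for every $j\ge i$.

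Finally, the corollary follows by noting $\E\!\left[\,k - |\widehat A_j \cap A_j(1)|\,\right] \le k(k/n)^{2\beta T}$, so the expected total number of mis-recalled neurons across all $\le L$ positions is at most $kL(k/n)^{2\beta T}$; since this count is a nonnegative integer, Markov's inequality yields exact recall of the whole sequence w.h.p.\ as soon as $kL(k/n)^{2\beta T}=o(1)$. The threshold $T \ge \tfrac{1}{2\beta}\sqrt{\ln(nL)/\ln(n/k)}$ is exactly the value that balances the two logarithmic scales, giving $2\beta T\ln(n/k) \ge \sqrt{\ln(nL)\ln(n/k)}$ and driving this product below $1$ in the parameter regime of the theorem. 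I expect the main obstacle to be pillar (iii): showing that recall errors do not cascade --- that the overlap bound holds with the \emph{same} exponent uniformly in $j$ rather than deteriorating geometrically along the chain --- and, within it, extracting the precise exponent $2\beta T$ from the large-deviation analysis of the cap competition. Establishing exact cap stability across all $T$ rounds in pillar (i), despite the coupled feedforward/recurrent dynamics and repeated homeostasis, is the other delicate ingredient.
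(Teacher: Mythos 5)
Your overall architecture---stability of the training caps, $(1+\beta)^T$ amplification of used synapses under homeostasis, an induction along the chain for recall, and a final Markov/union step---is the same as the paper's proof. However, there are two genuine gaps, both at points you partially flagged yourself.

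The central gap is in your pillar (iii). Your claim that ``the failure probability is controlled by the correct neurons of $\widehat A_{j-1}$ alone, so a handful of errors in $\widehat A_{j-1}$ cannot seed additional errors in $\widehat A_j$'' is not true as stated, and it is exactly the point that needs proof. If a fraction $\epsilon$ of $\widehat A_{j-1}$ is wrong, the amplified input into each correct neuron of $A_j(1)$ is reduced by the factor $(1-\epsilon)$, so the probability that an outside neuron displaces a correct one \emph{does} grow with $\epsilon$: in the paper's notation the per-neuron displacement probability is $\exp(-\tfrac12(1-2\epsilon)(1+2\beta T)^2\ln\tfrac nk)$, which degrades as $\epsilon$ grows. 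What saves the induction is not that errors have no downstream effect, but that the resulting recursion $\epsilon_j \le f(\epsilon_{j-1})$, with $f(\epsilon) = \tfrac nk \exp(-\tfrac12(1-2\epsilon)(1+2\beta T)^2 \ln \tfrac nk)$, is self-stabilizing: the paper invokes Lemma~\ref{lemma:expsoln} to show $f$ has a fixed point $\epsilon^* \le (k/n)^{2\beta T}$, and this is precisely where the hypothesis $T \ge 1/(\beta \ln(n/k))$ is consumed. Your proposal never uses that hypothesis anywhere---a telltale sign that this ingredient is missing. You correctly identify non-cascading as the main obstacle, but the proposal contains no mechanism (fixed point or otherwise) that overcomes it.

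The second gap is in your derivation of the w.h.p.\ corollary. Markov's inequality applied to the expectation bound gives $kL(k/n)^{2\beta T} = kL\exp(-2\beta T\ln\tfrac nk)$, and with the threshold $2\beta T = \sqrt{\ln(nL)/\ln(n/k)}$ this is $kL\exp(-\sqrt{\ln(nL)\ln(n/k)})$, which need not be $o(1)$: with $k=\sqrt n$, $L=n/(2k)$, $n=10^6$ it is roughly $e^{13}\cdot e^{-12} > 1$. The ``in particular'' in the theorem statement is misleading on this point; the paper's proof does not pass through the expectation bound but returns to the per-neuron tail $\exp(-\tfrac12(1+2\beta T)^2\ln\tfrac nk)$, whose exponent retains the quadratic term $2\beta^2T^2\ln\tfrac nk \ge \tfrac12\ln(nL)$ that Markov-on-the-expectation discards, and then union-bounds over the $\sim nL$ neuron/position pairs to get $\widehat A_\sigma = A_\sigma(1)$ for all $\sigma$ simultaneously. (A more minor issue: in pillar (i), your ``constant multiplicative gap between the $k$-th and $(k+1)$-th neuron'' on round one is neither true---adjacent binomial order statistics can tie---nor needed; in the paper the round-two separation comes from the $(1+\beta)$ amplification of round-one winners against the at most $(1+\beta)^{6\ln n}$ amplification available to neurons of other caps, which is controlled by Lemma~\ref{lemma:neuronlottery} together with $\beta \le 1/(12\ln n)$.)
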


In other words, following $T$ rounds of rehearsal, subsequent presentation of any input assembly in the sequence results in the activation of the corresponding assembly and the rest of the sequence in the target area. We note that the number of presentations needed grows inversely with the plasticity, and crucially, the plasticity cannot be too high.

\paragraph{Memorization with a scaffold.} A well-known phenomenon in cognitive science is that memorization is easier by creating associations such as mnemonics. For sequences, learning one sequence by associating with another sequence, element by element (e.g., learning the alphabet to a tune), helps with retention and recall. We consider a very simple form of this: when projecting a sequence, we create two copies of the sequence that ``scaffold" each other. Remarkably, this leads to provably better recall of the sequence with about half as many training rounds as simple sequence projection. 

\begin{theorem}[\textbf{Scaffold sequence copy}] \label{thm:scaffold}
Suppose that areas $A$ and $B$ are connected to each other, and only $A$ receives input from area $S$. Consider a sequence $S_1, \ldots, S_L$ of sets of $k$ neurons in area $S$, for $L \le n / 2k$, which satisfy $|S_{\sigma} \cap S_{\sigma'}| \le \Delta$ for all $\sigma \neq \sigma'$. Suppose this sequence is presented $T$ times (with each beginning from rest) with areas $A$ and $B$ disinhibited, to form a sequence of caps $A_1(t), \ldots, A_L(t)$ within $A$, and $B_1(t), \ldots, B_L(t)$ within $B$ on round $t$. After each round, homeostasis is applied, so that each neuron's incoming weights sum to $1$. Then for 
\[ kp \ge 3 \ln n, \qquad \Delta \le \frac{k}{(2\ln n)^2}, \qquad \beta \le \frac{\ln \frac{n}{2kL}}{2 (\max \{\Delta p, 6 \ln n\})^2 }, \qquad T \ge \frac{1}{\beta \ln (n/k))}\]
when any $S_i$ fires once (for $1 \le i \le L$), and $A$ and $B$ are allowed to fire $L-i+1$ times to form a sequence of caps $\widehat A_i, \ldots, \widehat A_L$ and $\widehat B_i, \ldots, \widehat B_L$, we will have
\[\frac{\E[|\widehat A_j \cap A_j(1)|]}{k}, \frac{\E[|\widehat B_j \cap B_j(1)|]}{k} \ge 1 - \left(\frac{k}{n}\right)^{4 \beta T}\]
for any $j \ge i$.
In particular, for 
\[T \ge \frac{1}{4 \beta}\sqrt{\frac{\ln(nL)}{\ln (n/k)}}\]
w.h.p. we get perfect recall of the entire sequence in both areas $A$ and $B$.
\end{theorem}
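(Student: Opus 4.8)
The plan is to follow the architecture of the proof of Theorem~\ref{thm:simple}, but to treat the coupled pair $(A,B)$ as a single dynamical system and isolate the effect of the feedback fiber $B \to A$. First I would pin down the timing of the two‑area recurrence. Presenting $S_1,\dots,S_L$ one step apart, a short induction on the firing schedule shows that on each round the cap $A_j$ is formed from the simultaneous arrival of three signals --- the external input $S_j$, the recurrent signal $A_{j-1}$, and the delayed feedback $B_{j-2}$ --- while $B_j$ is formed from $A_j$ together with the recurrent $B_{j-1}$. Hebbian plasticity therefore strengthens, on every round, the synapses $S_j \to A_j$, $A_{j-1}\to A_j$ and $B_{j-2}\to A_j$ (and symmetrically $A_j \to B_j$ and $B_{j-1}\to B_j$). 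The key structural observation, which drives the whole improvement, is that during \emph{recall} --- when $S$ is silent after the first step --- each target $A_j$ (for $j \ge i+2$) is pushed toward its round‑one identity by \emph{two} reinforced pathways, the recurrent $A_{j-1}\to A_j$ and the feedback $B_{j-2}\to A_j$, rather than the single recurrent pathway available in the simple copy; the first one or two positions have only a single pathway and are handled exactly as in Theorem~\ref{thm:simple}.

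With the schedule fixed, I would carry out the same three‑stage analysis as in Theorem~\ref{thm:simple}, now for the joint system. (i) \emph{Training consistency}: using the overlap bound $|S_\sigma \cap S_{\sigma'}|\le\Delta$ and the stated ceiling on $\beta$, show by induction over rounds that $A_j(t)$ and $B_j(t)$ agree with $A_j(1)$ and $B_j(1)$ up to a negligible fraction, so that a single set of synapses per position is reinforced throughout training; this is the one‑area argument applied to $A$ and $B$ in alternation, the only new point being that the feedback term $B_{j-2}\to A_j$ must be shown not to pull $A_j(t)$ off $A_j(1)$, which the same $\Delta$‑ and $\beta$‑bounds that control the recurrent term also control. (ii) \emph{Weight dynamics}: under per‑round homeostasis, track the fraction of a target neuron's incoming weight carried by its reinforced synapses; relative to baseline this fraction grows like $(1+\beta)^t$, and accumulating reinforcement along two pathways rather than one doubles the rate at which a correct neuron's advantage over competitors builds up. (iii) \emph{Recall error and summation}: bound the probability that a neuron of $A_j(1)$ is displaced from the recalled cap by a competitor carrying only baseline weights; the doubled accumulation rate turns the per‑neuron failure bound $(k/n)^{2\beta T}$ of the simple copy into $(k/n)^{4\beta T}$, and linearity of expectation over the $k$ neurons of each cap yields the stated bounds on $\E[|\widehat A_j \cap A_j(1)|]/k$ and $\E[|\widehat B_j \cap B_j(1)|]/k$. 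For the ``in particular'' claim I would take a union bound: with $T \ge \frac{1}{4\beta}\sqrt{\ln(nL)/\ln(n/k)}$ the quantity $(k/n)^{4\beta T}$ is small enough that the expected total number of misfiring neurons, summed over both areas and all $\le L$ recalled positions, is $o(1)$, so by Markov's inequality every cap is recalled exactly with high probability.

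The hard part will be making the ``doubling'' rigorous in the presence of the feedback loop. In the simple copy the single reinforcement source $A_{j-1}\to A_j$ is analyzed with the randomness of one fiber; here $A$ and $B$ are mutually coupled through $A\to B\to A$, so the feedback pathway $B_{j-2}\to A_j$ is \emph{not} an independent copy of the recurrent pathway --- $B_{j-2}$ is itself a function of earlier $A$‑caps. I would need to show that this feedback nevertheless contributes an essentially independent boost, rather than merely echoing $A$'s own signal, so that the two pathways' fluctuations concentrate and the margin over competitors genuinely grows at twice the rate. Controlling these cross‑area correlations, while simultaneously preventing a recall error at position $j$ from propagating to position $j+1$ along the sequence, is the crux; once the feedback is shown to act as a clean second reinforcement channel, the remaining estimates are the concentration and order‑statistics calculations already used for Theorem~\ref{thm:simple}.
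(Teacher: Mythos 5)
Your plan is, in outline, the paper's own proof: the same firing schedule (each $A_j$ driven by $S_j$, $A_{j-1}$, and the delayed feedback $B_{j-2}$, each $B_j$ by $A_j$ and $B_{j-1}$), the same three stages (overlap control on the first presentation via Lemma~\ref{lemma:overlap}, stability of every cap across presentations by double induction on $t$ and $\sigma$ using Lemma~\ref{lemma:neuronlottery}, then a recall analysis that tracks the recovered fraction $1-\epsilon$ of each cap and bounds its fixed point via Lemma~\ref{lemma:expsoln}), and the same source of the factor-of-two gain: during recall \emph{two} reinforced fibers feed $A_j$ instead of one, so the correct neurons' margin over competitors doubles while the competitors' input variance only doubles, squaring the per-neuron failure bound from $(k/n)^{2\beta T}$ to $(k/n)^{4\beta T}$. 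Two remarks. First, what you single out as the crux --- that $B_{j-2}$ is causally a function of earlier $A$-caps, so the feedback might merely echo the recurrent signal --- dissolves almost immediately in this model: the recurrent synapses within $A$ and the fiber from $B$ into $A$ are \emph{independent} random bipartite graphs, so once your stage (i) has pinned every cap to its round-one identity, the inputs a neuron of $A$ receives along the two pathways are sums over disjoint, independent edge sets; what matters is only \emph{which} neurons fire, not how they came to fire, and error propagation down the sequence is absorbed by the same fixed-point argument used in Theorem~\ref{thm:simple}. The paper spends no words on this point for exactly this reason. Second, your stage (iii) is, if anything, more faithful to the theorem statement than the paper's own text: the paper's displayed recall computation for Theorem~\ref{thm:scaffold} is carried over nearly verbatim from Theorem~\ref{thm:simple} (competitor input $\mathcal{B}(k,p)/(2np)$, exponent $2\beta T$), leaving the doubling implicit in the observation that a larger fraction of the trained input remains active without the stimulus; the explicit two-pathway accounting you describe is what is actually needed to produce the claimed $4\beta T$ exponent and the threshold $T \ge \frac{1}{4\beta}\sqrt{\ln(nL)/\ln(n/k)}$, so you should carry it out rather than defer to the single-area calculation (noting, as you do, that positions $i$ and $i+1$ only enjoy a single pathway and inherit the weaker Theorem~\ref{thm:simple} bound).
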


We note that the bound on the number of training rounds is a factor of two smaller here compared to simple sequence projection (Figure \ref{fig:seq_compare}).

\subsection{(Dis)inhibition with interneurons}

Interneurons serve to inhibit and disinhibit areas. The lemma below shows that by connecting $D_A$ to $B$ and $D_B$ to $A$, we get alternate firing of the two areas $A$ and $B$. More precisely, the firing of a cap in area $B$ causes $D_A$ to fire, which causes $I_A$ to cease firing, which allows a cap in $A$ to fire in response to input from $S$ and $B$ (see Figure \ref{fig:alt_demo}).

\begin{figure}[h!]
    \centering
    \begin{subfigure}[h]{0.32\textwidth}
    \begin{tabular}{c|c}
        Time step & Firing\\
        \hline
        1 & $S_1, D_A, I_B$\\
        2 & $S_2, A_1, D_B, I_A$\\
        3 & $S_3, B_1, D_A, I_B$\\
        4 & $S_4, A_2, D_B, I_A$\\
        5 & $S_5, B_2, D_A, I_B$\\
        \vdots & \vdots\\
    \end{tabular}
    \end{subfigure}
    \hspace{0.1\textwidth}
    \begin{subfigure}[h]{0.32\textwidth}
    \includegraphics[width=\textwidth]{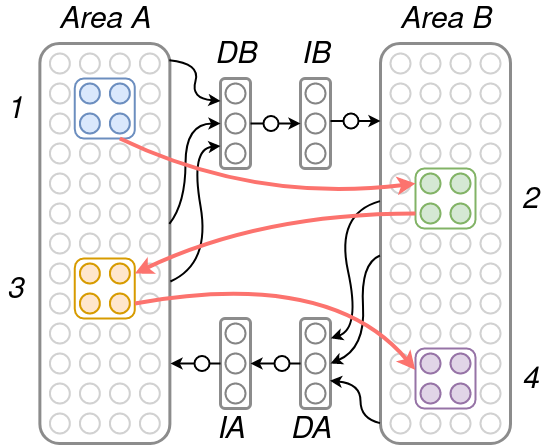}
    \end{subfigure}
    \caption{An example sequence of firings with interneurons. The table on the left shows the sets of neurons firing simultaneously on each round; on the right is the architecture of the network.}
    \label{fig:alt_demo}
\end{figure}

\begin{lemma}[\textbf{Alternation}] \label{lemma:inter}
Let $D_A$ receive input from $B$ and $D_B$ receive input from $A$, with $A$ and $B$ connected to each other and both receiving input from $S$.
For any sequence of inputs $S_1, S_2, \ldots$ in the input area, where $D_A$ and $I_B$ are initially firing, the resulting sequence of activations $S_1', S_2'\ldots$ satisfies $S_\sigma' \subseteq A$ for $\sigma$ odd and $S_\sigma' \subseteq B$ for $\sigma$ even. 
\end{lemma}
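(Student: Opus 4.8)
The plan is to prove the statement by induction on the activation index $\sigma$, tracking at each step which of the two areas is disinhibited and therefore eligible to fire a cap. The governing principle I would state at the outset is that, because the interneurons integrate essentially instantaneously, the disinhibition state on any given step is determined entirely by the excitatory caps that fired on the preceding step: $D_A$ fires and disinhibits $A$ exactly when a cap fired in $B$, and symmetrically $D_B$ disinhibits $B$ exactly when a cap fired in $A$; absent such drive, the corresponding inhibitory population ($I_A$ or $I_B$) fires and keeps its area silent. Since the input area $S$ fires on every step, a disinhibited area always receives nonempty input and hence always produces a $k$-cap, whereas an inhibited area produces none.

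For the base case I would handle the initial condition. The run starts from rest, so no cap forms on the first step, as no area has yet received excitatory input. The externally imposed firing of $D_A$ and $I_B$ disinhibits $A$ and silences $B$ for the next step; since $A$ then receives the input relayed from $S_1$, the first activation $S_1'$ is a cap in $A$, matching the required parity for $\sigma = 1$. Firing this cap in $A$ drives $D_B$ (which receives input from $A$) and leaves $D_A$ without input, so on the following step $B$ is disinhibited and $A$ silenced---precisely the configuration needed to continue the alternation.

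For the inductive step I would assume that the $\sigma$-th activation lands in the correct area ($A$ for odd $\sigma$, $B$ for even $\sigma$) and is the unique cap on its step. The cap in the firing area drives the disinhibitory population of the opposite area while depriving its own of input, so on the next step the roles swap cleanly: the previously silent area becomes the unique disinhibited one, receives input from $S_{\sigma+1}$ together with the preceding cap, and produces the $(\sigma+1)$-th activation there. This flips the parity correctly and preserves the invariant that exactly one area fires per step, closing the induction.

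The crux of the argument is the bookkeeping of the one-step delay in the excitatory dynamics against the instantaneous response of the interneurons, and in particular the exclusivity claim that exactly one area is disinhibited on each step. This is the only place I expect real subtlety: one must rule out both that disinhibition ``leaks'' so that the two areas fire together and that it dies out so that neither fires. Both are excluded by the observation that each area's disinhibition state at step $t+1$ depends solely on whether the opposite area fired at step $t$, together with the inductive guarantee that precisely one area fires on each step. Notably, no quantitative hypotheses on $k$, $p$, $\beta$, or the overlap $\Delta$ enter here, since the claim concerns only which area each cap occupies, not the identity of its neurons.
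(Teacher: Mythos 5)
Your proposal is correct and follows essentially the same route as the paper: induction on $\sigma$, where the area firing on the previous step determines (via $D_A$/$D_B$ and hence $I_A$/$I_B$) which single area is disinhibited on the current step, with the initial firing of $D_A$ and $I_B$ handling the base case. The paper phrases the inductive step using the two preceding activations $S_{\sigma-1}'$ and $S_{\sigma-2}'$ rather than your explicit one-step exclusivity invariant, but the argument is the same.
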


\begin{proof}
By induction on $\sigma$. For the base case $\sigma=1$, as $I_A$ is not firing initially and $I_B$ is, only neurons in $A$ will fire. So, $S_1' \subseteq A$. More generally assume the claim holds for all $\sigma' < \sigma$. If $\sigma$ is even, then $S_{\sigma-1}' \subseteq A$ and $S_{\sigma-2}' \subseteq B$ so $D_B$ and $I_A$ fired on the previous round. Hence, $B$ is disinhibited and $A$ is inhibited, so $S_\sigma' \subseteq B$. If $\sigma$ is odd, then $S_{\sigma - 1}' \subseteq B$ and $S_{\sigma-2}' \subseteq A$, so $D_A$ and $I_B$ fired on the previous round. Hence, $A$ is disinhibited and $A$ is inhibited, so $S_\sigma' \subseteq A$.
\end{proof}

We will make extensive use of this property in the next section.

\subsection{Learning finite state machines}

Here we demonstrate that \name\  is powerful enough to simulate an arbitrary finite state machine (FSM). In fact, an FSM is learned --- that is, memorized --- simply by presenting all valid transitions between states in the FSM.

\paragraph{Finite state machines.} A finite state machine (FSM) is a tuple $F = (Q, \Sigma, q_0, q_A, q_R, \delta)$, where $Q$ is a set of states, $\Sigma$ is a finite input alphabet, $q_0 \in Q$ is the initial state, and $\delta \colon Q \times \Sigma \to Q$ is the transition function, which maps a (current state, input symbol) pair to a new state (see Figure \ref{fig:fsm} for an example). Given an input string $\sigma_1 \sigma_2 \ldots$ and starting from the input state $q_0$, at time $t \ge 1$ the FSM goes from state $q_{t-1}$ to $q_t = \delta(q_{t-1}, \sigma_t)$.

The states $q_A, q_R \in Q$ are special terminal states, along with a special terminal character $\square \in \Sigma$ which only appears at the end of the input string. Every state $q \not\in \{q_A, q_R\}$ satisfies $\delta(q, \square) \in \{q_A, q_R\}$. When the machine reaches state $q_A$ or $q_R$, we say that it accepts or rejects the string, respectively.

\begin{figure}
    \centering
    \includegraphics[width=0.35\textwidth]{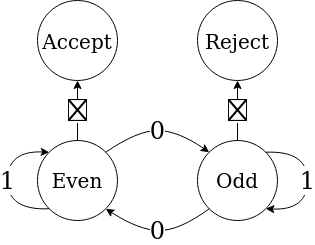}
    \hspace{0.1\textwidth}
    \includegraphics[width=0.4\textwidth]{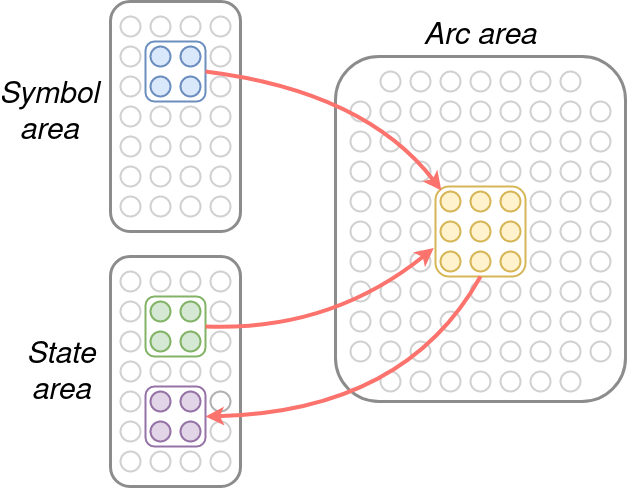}
    \caption{On the left is an example of a finite state machine (FSM), which consists of a finite number of states, joined by transitions (arrows). The machine changes states based on input symbols. This FSM accepts binary strings which contain an even number of zeros. On the right is the network architecture used here to simulate finite state machines. There is an assembly for each symbol, state, and transition; each pair of state and symbol assemblies projects to the associated arc assembly, which in turn projects back to the assembly corresponding to the state the FSM would switch to after seeing that state/symbol combination..}
    \label{fig:fsm}
\end{figure}

\begin{theorem} \label{thm:fsm}
Let $F = (Q, \Sigma, q_0, q_A, q_R, \delta)$ be an FSM. Consider a network consisting of an input area $I$ and brain areas $S$ and $A$, where $I, S$ both have connections to and from area $A$, with interneurons $D_S, D_A$, where $D_S$ disinhibits $S$ upon input from $A$ and $D_A$ disinhibits $A$ upon input from $S$. Suppose that for each $q \in Q, \sigma \in \Sigma$, there exist designated sets of $k$ neurons $S_q \subseteq S$, $I_\sigma \subseteq I$, where $|S_q \cap S_r|, |I_\sigma \cap I_\rho| \le \Delta = o(k)$ for all $q \neq r, \sigma \neq \rho$. Then if 
\[n \ge |Q|^2|\Sigma|^2, \qquad kp \ge 6 \ln n / k, \qquad \beta \le \frac{ \ln \frac{n}{2kL}}{2 (\max \{\Delta p, 6 \ln n\})^2 }, \qquad T \ge \frac{12}{\beta} \sqrt{\frac{\ln n}{kp}} \] w.h.p. this network can simulate $F$ on any input string $\sigma_1 \sigma_2 \cdots \sigma_L$ in the following sense: If $S_{q_0}$ is made to fire on round $1$, and $I_{\sigma_i}$ is made to fire on round $2i - 1$, $1\le i \le L$, then after $2L+2$ rounds, $S_{q_A}$ will fire if $F$ accepts the string and $S_{q_R}$ will fire if $F$ rejects the string.
\end{theorem}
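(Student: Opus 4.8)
The plan is to separate the simulation into a \emph{control layer} that schedules which areas fire on which rounds, a \emph{memory layer} that stores $\delta$ in synaptic weights, and an inductive argument matching the network's trajectory to the run of $F$. For the control layer I would invoke Lemma \ref{lemma:inter} directly: since $D_A$ is driven by $S$ and $D_S$ is driven by $A$, the pair $(S,A)$ plays exactly the role of $(A,B)$ in the alternation lemma, with the input area $I$ playing the role of the external stimulus area. Initializing so that $S$ is disinhibited and $A$ inhibited, the lemma guarantees that caps form in $S$ on odd rounds $2i-1$ and in $A$ on even rounds $2i$, with the externally driven symbol $I_{\sigma_i}$ arriving in lockstep with the state cap on round $2i-1$. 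Hence on each odd round the conjunction of a state assembly in $S$ and a symbol assembly in $I$ projects into the (now disinhibited) $A$, and on the next even round the resulting assembly in $A$ projects back into the (now disinhibited) $S$.

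The heart of the proof is the memory layer: after $T$ presentations of all valid transitions, I must show that (a) the joint firing of $S_q$ and $I_\sigma$ reliably caps in $A$ to a fixed transition assembly $A_{q,\sigma}$, and (b) the firing of $A_{q,\sigma}$ reliably caps in $S$ to $S_{\delta(q,\sigma)}$. Step (b) is a direct instance of the single-source projection of Theorem \ref{thm:simple}: since $S$ receives only from $A$, the set $A_{q,\sigma}$ of $k$ neurons projects into $S$ and, after repeated co-firing with $S_{\delta(q,\sigma)}$ under homeostasis, recalls it with overlap at least $1-(k/n)^{\Omega(\beta T)}$. Step (a) is where the new ingredient enters, and I would analyze it as a \emph{conjunctive} projection. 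During training of $(q,\sigma)$ each neuron of $A_{q,\sigma}$ has its incoming weights from \emph{both} $S_q$ and $I_\sigma$ reinforced, so after homeostatic renormalization its unit incoming weight is concentrated on those two bundles and on recall it receives input approaching $1$ from the jointly firing $S_q$ and $I_\sigma$. A competitor in a sibling assembly $A_{q,\sigma'}$ (same state) or $A_{q',\sigma}$ (same symbol) has been reinforced on only one of the two bundles, so it collects at most the fraction of its weight resting on that single bundle --- a constant factor less. This constant-factor gap is precisely what lets the top-$k$ cap isolate $A_{q,\sigma}$. The overlap hypotheses $|S_q\cap S_r|,|I_\sigma\cap I_\rho|\le\Delta=o(k)$ and the capacity hypothesis $n\ge|Q|^2|\Sigma|^2$ then ensure, by a union bound over all $O(|Q|^2|\Sigma|^2)$ pairs of transitions, that the $|Q||\Sigma|$ transition assemblies are near-disjoint and that spurious input through shared predecessors is a lower-order term, exactly as the overlap and capacity hypotheses operate in Theorems \ref{thm:simple} and \ref{thm:scaffold}.

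With both recall steps in hand, correctness follows by induction on the input position $i$, with hypothesis that the state cap $S_{q_{i-1}}$ fires on round $2i-1$, where $q_{i-1}$ is the state of $F$ after reading $\sigma_1\cdots\sigma_{i-1}$; the base case is the given firing of $S_{q_0}$ on round $1$. Using the schedule from Lemma \ref{lemma:inter}, step (a) fires $A_{q_{i-1},\sigma_i}$ on round $2i$ and step (b) fires $S_{\delta(q_{i-1},\sigma_i)}=S_{q_i}$ on round $2i+1$, closing the induction; the terminal symbol $\square$, appearing last, drives the final transition into $S_{q_A}$ or $S_{q_R}$, which fires by round $2L+2$. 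I expect the main obstacle to be step (a): certifying that the conjunction of two shared, partially overlapping inputs disambiguates all transitions simultaneously. The constant-factor gap between doubly- and singly-reinforced neurons must survive homeostatic renormalization and dominate the accumulated fluctuations from up to $|Q||\Sigma|$ competing assemblies, uniformly over the whole length-$L$ run; pinning this down is what forces both the capacity condition and the bound $T\ge\frac{12}{\beta}\sqrt{\ln n/(kp)}$. The residual error analysis --- bounding cross-talk through the $\Delta$-overlaps and propagating the per-step recall error across the $2L$ steps --- then follows the template already set in the proofs of Theorems \ref{thm:simple} and \ref{thm:scaffold}.
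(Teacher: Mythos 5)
Your overall skeleton matches the paper's proof: Lemma \ref{lemma:inter} handles the odd/even scheduling, a transition assembly $A_{q,\sigma}$ is formed for each pair by the joint projection of $S_q$ and $I_\sigma$, strengthened weights implement the map $A_{q,\sigma} \to S_{\delta(q,\sigma)}$, and induction on the input position finishes the simulation. (The paper, incidentally, puts its detailed quantitative work on your step (b), treating your step (a) as immediate from the training dynamics; your extra care on (a) is a difference of emphasis, not of approach.)

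The genuine gap is your claim that the $|Q||\Sigma|$ transition assemblies are ``near-disjoint'' and that ``spurious input through shared predecessors is a lower-order term, exactly as in Theorems \ref{thm:simple} and \ref{thm:scaffold}.'' This is false in the FSM setting, and it is precisely where this theorem differs from sequence memorization. Two transitions that share a state or a symbol, say $(q,\sigma)$ and $(q,\sigma')$, have joint inputs $S_q \cup I_\sigma$ and $S_q \cup I_{\sigma'}$ overlapping in roughly half their neurons ($\ge k$ out of $2k$), so their caps in $A$ can overlap in a constant fraction of $k$ --- nothing like the $O(\ln n)$ overlaps that make cross-talk negligible in Theorems \ref{thm:simple} and \ref{thm:scaffold}. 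The paper confronts this with Lemma \ref{lemma:highoverlap}, which gives only $|A_{q,\sigma} \cap A_{q',\sigma'}| \le k/2$ w.h.p.; consequently a wrong-state neuron $j \in S_{\delta(q',\sigma')}$ \emph{does} receive strengthened input when $A_{q,\sigma}$ fires, through the up-to-$k/2$ shared neurons, and the disambiguation margin is a constant factor of two, not an order of magnitude. Closing that constant-factor gap is exactly what requires the Chernoff bound on the unstrengthened part $e(A_{q,\sigma}\setminus A_{q',\sigma'}, j)$, and it is what produces the condition $T \ge \frac{12}{\beta}\sqrt{\ln n/(kp)}$; with ``lower-order cross-talk'' reasoning this bound on $T$ cannot be derived, and the union bound over transitions does not rescue it, since near-disjointness fails for structural reasons (shared states and symbols), not capacity reasons. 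Note that your step (a) already contains the correct mechanism --- doubly- versus singly-reinforced neurons separated by a factor of two --- so the repair is to carry that same quantitative treatment, anchored by Lemma \ref{lemma:highoverlap} rather than by near-disjointness, into step (b) as well.
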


Note that by Lemma \ref{lemma:inter} if $S$ fires on some round, $A$ will be permitted to fire on the next, and vice versa. The crux of the simulation is to enable the configuration of synaptic weights of the network so that together, $S_q$ and $I_\sigma$ project to an assembly $A_{q, \sigma} \subseteq A$, and in turn $A_{q, \sigma}$ projects to $S_{\delta(q, \sigma)}$ (see Figure \ref{fig:fsm} for a schematic). As part of the proof, we will show that firing the sequence $\{S_q, I_\sigma\}, A_{q, \sigma}, S_{\delta(q, \sigma)}$ at least $7 / \beta$ times will suffice to arrange this, so the FSM simulation can actually be configured by simply observing state transitions. With this in place, if $S_q$ and $I_\sigma$ fire together, two rounds later $S_{\delta(q, \sigma)}$ will fire, simulating a single transition of the FSM. So, with $S_{q_0}$ firing initially and $I_{\sigma_i}$ firing after $2L$ rounds, $S_{q_A}$ (resp. $S_{q_R}$) will fire if the FSM accepts (resp. rejects). Hence, the behavior of the FSM on any input string $\sigma_1 \ldots \sigma_L$ can be simulated in the projected FSM by setting $S_{q_0}$ to fire initially in area $S$ and presenting $I_{\sigma_1}, I_{\sigma_2}, \ldots$ every other time step in area $I$.  In Figure \ref{fig:fsm_emp}, we show assemblies simulating the FSM from Fig.~\ref{fig:fsm}.

\paragraph{An illustrative FSM.} In Figure \ref{fig:fsm_example}, we show assemblies simulating a FSM which recognizes strings which consist of the base $10$ representation of a number divisible by $3$. With $3$ non-terminal states and an alphabet of $10$ symbols, it operates by tracking the cumulative sum of the digits modulo $3$, and accepts if this modulus is $0$ at the end of the string (rejecting otherwise).


\begin{figure}[t]
    \centering
    \includegraphics[width=0.6\textwidth]{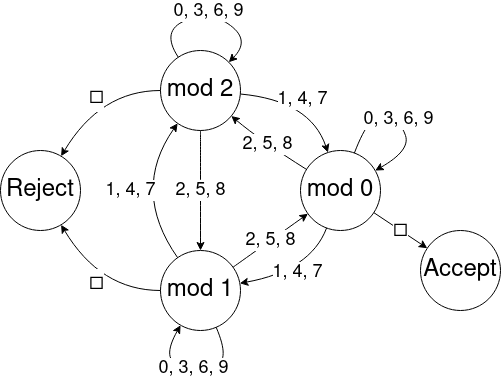}\\
    
    \includegraphics[width=\textwidth]{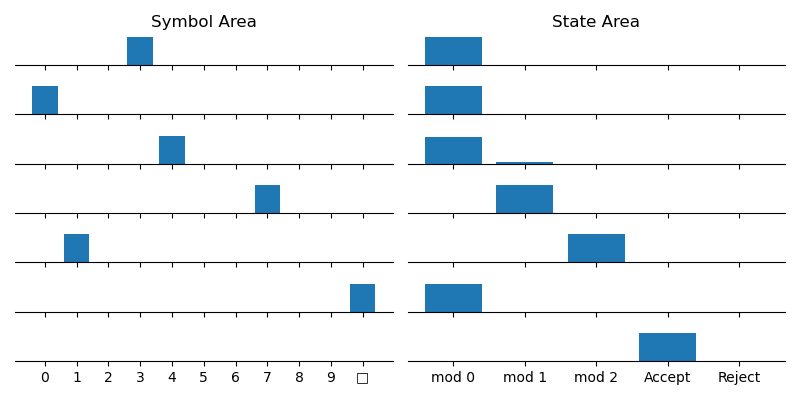}
    \caption{Above is the schematic of a FSM which accepts numbers in base $10$ which are multiples of $3$. The non-terminal states of the machine correspond to the remainder of the sum of digits which have been seen; the FSM accepts if the remainder at the end of the string is 0, and rejects otherwise. This FSM has a total of 5 states, 11 symbols, and 33 transitions. In the lower plot, we display the fraction of overlap between the responses of the trained model and the assemblies representing states and symbols, while simulating this FSM on the test string \texttt{30471}. Here, $n=5000, k=70, p=0.4, \beta=0.1$, and the model was trained with 15 presentations of each transition.}
    \label{fig:fsm_example}
\end{figure}


\begin{remark} A finite-state transducer (a finite state machine that produces an output symbol with each transition) with output function $\theta \colon Q \times \Sigma \to \Gamma$ can be simulated using a third area $B$, which contains designated sets of $k$ neurons $B_\gamma$ for each $\gamma \in \Gamma$. To accomplish this, one simply fires $B_{\theta(q, \sigma)}$ at the same time as $S_{\delta(q, \sigma)}$ during training. Then, w.h.p., $B_{\theta(q, \sigma)}$ will fire two steps after $A_q$ and $I_\sigma$ under the same conditions on $T$.
\end{remark}

\subsection{Turing Completeness}
A Turing Machine (TM) 
is an FSM together with a tape (read-write memory). Concretely, a single-tape TM is a tuple $M = (Q, \Sigma, \{L, R\}, q_0, q_A, q_R, \delta)$ where $Q$ is a set of states, $\Sigma$ is a set of tape symbols, $q_0$ is an initial state, and $\delta \colon Q \times \Sigma \to Q \times \Sigma \times \{L, R\}$ is a transition function. The TM has access to a tape of symbols, which initially has written on it the input to the machine, with an infinite number of blank space symbols ($\sqcup$) extending to the left and right of the input. The tape head of the TM indicates the current symbol, and begins at the first symbol of the input. The transition function of a Turing machine $\delta \colon Q \times \Sigma \to Q \times \Sigma \times \{L, R\}$ maps the current state and symbol indicated by the tape head to a new state, a new symbol to write, and a direction to move the tape head. The operation of the machine is as follows: At each time step, when it is in state $q$, it reads a symbol $\sigma$ from its tape. Denote the output of the transition function as $\delta(q, \sigma) = (r, \rho, d)$. It then (i) changes state to $r$, (ii) replaces $\sigma$ with $\rho$ on the current tape square, and (iii) moves to the square immediately left of the current one if $d = L$ and right if $\Delta = R$. For simplicity in the simulation, we require that $\rho = \sigma$ if $d = L$, which is easily seen to maintain generality. 
Notably, the head of a Turing machine may be viewed as a finite state transducer which outputs symbols from the alphabet $\Sigma \times \{L, R\}$, writing symbols back to its (unbounded) input tape. 

To simulate an arbitrary transition function, we augment the FSM network consisting of areas $I, S, A$ with areas $D$ and $M$, which are where movement commands and symbols to be written will be output (respectively). Areas $I$ and $M$ contain assemblies $I_\sigma$ and $M_\sigma$, respectively, for each $\sigma \in \Sigma$, $S$ contains an assembly $S_q$ for each $q \in Q$, and $D$ contains assemblies $D_L$ and $D_R$. Now, as in Theorem \ref{thm:fsm}, to implement a transition $\delta(q, \sigma) = (r, \rho, d)$, assemblies $I_\sigma$ and $S_q$ project to an assembly $A_{q, \sigma} \subseteq A$. In turn, $A_{q, \sigma}$ projects to $S_r, M_\rho$, and $D_d$.

What remains is to simulate the tape, namely maintain a sequence of symbols and a pointer location, and update both according to the output of the simulated transition function.
We will show how to simulate a tape with assemblies, and thus (together with the FSM simulation) a Turing machine in its entirety, momentarily. The idea of the tape simulation is to maintain an assembly for each nonempty tape square, which projects to the assembly corresponding to the appropriate symbol. Each tape assembly is linked with those representing neighboring tape squares, and distributed across several brain areas configured so that when the FSM simulation issues a movement command, the assembly corresponding to the next tape square in the direction of movement will fire. To overwrite the current tape square, a new assembly is created and its connections with its neighbor and the new symbol are strengthened. 

We remark that if the model instead has access to an external ``tape", we can achieve general computation with essentially just the FSM.  For the environment to implement a tape, it suffices for it to contain a large number of distinguishable spaces, which can each store a representation of a symbol from $\Sigma$ (imagine a stack of index cards and a writing implement). There is a pointer which indicates the current space, which is driven by the activity of the network such that it moves left or right when $D_L$ or $D_R$ fire in area $D$. If symbol $\sigma$ is contained in the space indicated by the pointer, it causes $I_\sigma$ to fire in area $I$; and if $M_\sigma$ fires, symbol $\sigma$ is written to the space.
 The system consisting of the model and its environment will implement the action of the Turing machine with transition function $\delta$ on whatever is initially written on the external tape.

\subsubsection*{Tape simulation}
To simulate the tape using assemblies, we split the tape into two halves and simulate each half independently. 
Each half-tape supports two operations: One operation adds a symbol to the beginning of the tape, while the other removes the current symbol so that the succeeding symbol is at the beginning. We implement this tape with assemblies as follows. 

Intuitively, the tape is represented by a sequence of assemblies which cycle between the three areas, with the current position of the beginning of the tape represented by the currently firing assembly (see Figure \ref{fig:tape_sim}). To remove the current symbol, the next area in the cycle is disinhibited, which causes the next assembly in the chain to fire (and the previous assembly is effectively forgotten); to add a new symbol, a new assembly is created in the preceding area, linked to the current assembly. The symbol at each position of the tape is stored by strengthened connections between the tape assemblies and symbol assemblies in a designated symbol area, which will fire when the associated tape assemblies do. Precisely, the simulation consists of maintaining a set of assemblies $A_1(t), \ldots, A_{L_t}(t)$ for $1 \le t \le T$ with the following properties:
\begin{enumerate}[label=(\roman*)]
    \item If the $t$'th operation adds $\sigma$ to the beginning of the tape, with $A_1(t) \subseteq H_i$, we will have $A_{1}(t+1) \subseteq H_{i-1}$, and $A{j+1}(t+1) = A_j(t-1)$ for all $1 \le j \le L_t$.
    \item If the $t$'th operation removes $\sigma$ from the beginning of the tape, we will have $A_i(t+1) = A_{i+1}(t)$ for all $1 \le i \le L_t-1$.
\end{enumerate}

A ``delete'' operation when assembly $A_1(t)$ in area $H_i$ is firing will simply cause $H_{i+1}$ to fire, and in turn strengthened connections from $A_1(t)$ to $A_2(t)$ will cause $A_2(t)$ to begin firing. An ``add'' operation is more complicated; with $H_{i-1}$ disinhibited, input from outside of the tape areas will drive creation of a new assembly $A_1(t+1)$, which is linked to $A_1(t) = A_2(t+1)$ and $S_\sigma$ (where $\sigma$ is the symbol to be added to the tape) by their simultaneous firing. Thus, we require an lower bound on the plasticity, so that $A_1(t+1)$ can be formed quickly, as well as an upper bound on the plasticity, to ensure that existing assemblies in area $H_{i-1}$ will not overlap $A_1(t+1)$ by too much. Three areas are required so that $A_1(t+1)$ becomes linked only in the forward direction to $A_1(t)$, and not in the backwards direction (i.e. in the future when $A_1(t+1)$ fires it will trigger $A_1(t) = A_2(t+1)$ to fire, but not vice versa).

\begin{figure}
    \centering
    \includegraphics[width=0.9\textwidth]{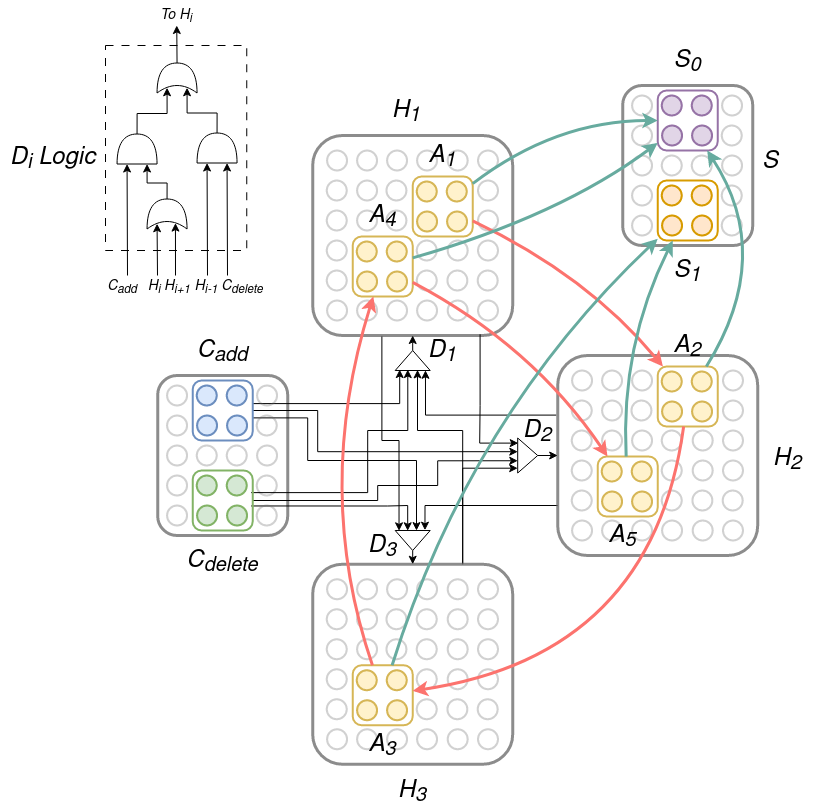}
    \caption{Simulating a restricted tape with assemblies. Assemblies $A_1, \ldots, A_5$ each represent a space on the tape, and are placed to cycle between areas $H_1, H_2, H_3$ (red arrows). $C_\text{add}$ and $C_{\text{delete}}$ are externally-activated control assemblies, which signal operations to areas $H_1, H_2, H_3$ and via interneuron populations $D_1, D_2, D_3$ determine the sequence in which they are disinhibited. On the upper left we provide a schematic of the logical function implemented by $D_i$, which carries out the inhibition and disinhibition of $H_i$ to simulate the operations signalled by the control assemblies. Black arrows in the larger diagram correspond to inputs to (i.e. synapses of) the interneurons, $D_1, D_2, D_3$. To represent symbol $\sigma$ being written at space $i$, assembly $A_i$ is linked to assembly $S_\sigma$ (teal connections) so that the firing of $A_i$ will cause $S_\sigma$ to fire. At this point in the simulation, the simulated tape has the string \texttt{00101} written on it.}
    \label{fig:tape_sim}
\end{figure}

\begin{lemma}\label{lemma:stack}[Tape Simulation]
Consider three brain areas $H_1, H_2, H_3$, each with $n$ neurons, where $H_i$ gives input to $H_{i+1}$ (addition modulo $3$) and all give input to an area $S$. The firing of $H_i$ is governed by interneurons $D_i$, which receive input from a control area $C$ containing assemblies $C_{\text{add}}$ and $C_{\text{delete}}$. Suppose the behavior of $D_i$ is as follows: $D_i$ permits $H_i$ to fire $T$ times when either $C_{\text{add}}$ fires together with $H_{i}$ or $H_{i+1}$, OR $C_{\text{delete}}$ and $H_{i-1}$ fire together. Moreover, there are two areas $E_1, E_2$, connected via interneurons which disinhibit $E_1$ for $T$ rounds after $E_2$ fires $T$ times, and disinhibit $E_2$ for $T$ rounds after $E_1$ fires. 
Suppose that $S$ initially contains assemblies $S_\sigma$ for each $\sigma \in \Sigma$, with $|S_\sigma \cap S_\rho| \le 6 \ln n$ for all $\sigma \neq \rho$. Let $E$ be a set of neurons containing a set of $k$-caps $Z_1, \ldots, Z_L$, where $|Z_t \cap Z_s| \le 6 \ln n$ for all $t \neq s$, and $Z_t$ fires on rounds $(t-1)T + 1$ through $tT$. Assume that
\[\sqrt{\frac{\ln n}{kp}} \le \beta \le \frac{\ln \frac{n}{2kK}}{72 \ln^2 n } \]
and $kp \ge 3 \ln n$.
Suppose there is a sequence of ``add'' and ``delete'' operations in the following sense: If the $t$th operation adds symbol $\sigma$, then $C_{\text{add}}$ and $S_\sigma$ are made to fire by external control on rounds $(t-1)T + 1$ through $tT$, while if the $t$th operation deletes the current symbol, then $C_{\text{delete}}$ fires on rounds $(t-1)T + 1$ through $tT$.  
For a sequence of length $L \le \frac{n}{2k}$, NEMO can simulate this sequence of operations: WHP, if $\sigma$ is the current symbol on round $t$, then $S_\sigma$ will fire on round $tT+1$. 
\end{lemma}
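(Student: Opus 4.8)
The plan is to argue by induction on the operation index $t$, maintaining throughout a structural invariant that encodes properties (i) and (ii) stated just above the lemma. Concretely, the inductive hypothesis asserts that just before the $t$-th operation (i.e. at round $(t-1)T+1$) there is a chain of assemblies $A_1(t),\ldots,A_{L_t}(t)$ representing the current tape contents, placed so that consecutive assemblies occupy consecutive areas in the cycle $H_1 \to H_2 \to H_3 \to H_1$; that the synaptic weights realize a strong \emph{forward} link $A_j(t) \to A_{j+1}(t)$ (so that firing $A_j(t)$ with the next area disinhibited recalls $A_{j+1}(t)$) and a strong \emph{symbol} link $A_j(t) \to S_{\sigma_j}$ to the assembly of the symbol stored at position $j$; that no backward links exist; that $A_1(t)$ is the head, firing at round $(t-1)T+1$; and that all assemblies created so far have pairwise overlap at most $6\ln n$. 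The base case is the initial (given) configuration, and the conclusion ``$S_\sigma$ fires on round $tT+1$'' will follow at each step from the head assembly's symbol link together with the recall bound of Theorem \ref{thm:simple}.

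For a \emph{delete} at step $t$, suppose the head $A_1(t)$ lies in $H_a$. Since $C_{\text{delete}}$ and $H_a$ fire together, the assumed behavior of the interneurons makes $D_{a+1}$ disinhibit $H_{a+1}$ for the $T$ rounds of the operation, exactly as in Lemma \ref{lemma:inter}. The established forward link $A_1(t)\to A_2(t)$ (carried by the fiber $H_a \to H_{a+1}$) then drives $A_2(t)$ to fire, and iterating the recall analysis of Theorem \ref{thm:simple} shows this recall is perfect w.h.p. The new head is $A_2(t) = A_1(t+1)$, giving property (ii), and its symbol link fires $S_{\sigma_2}$, the new current symbol, on round $tT+1$.

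The \emph{add} step is the crux of the argument. With the head $A_1(t)$ in $H_a$, the co-firing of $C_{\text{add}}$ and $H_a$ makes $D_{a-1}$ disinhibit the \emph{preceding} area $H_{a-1}$. The external caps $Z_t$ (together with $C_{\text{add}}$ and the externally driven $S_\sigma$) then seed the formation of a fresh assembly $A_1(t+1) \subseteq H_{a-1}$ by exactly the projection/assembly-formation dynamics analyzed in Theorem \ref{thm:simple}. Using the $E_1,E_2$ alternation to interleave the firing of $A_1(t+1)$ and $A_1(t)$, the fiber $H_{a-1}\to H_a$ strengthens the forward link $A_1(t+1)\to A_1(t) = A_2(t+1)$, while the \emph{absence} of any fiber $H_a \to H_{a-1}$ in the three-area cycle guarantees that no backward link $A_1(t) \to A_1(t+1)$ can form; this is precisely why three areas are needed. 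Simultaneously, since $S_\sigma$ is driven externally on every round, it fires immediately after each firing of $A_1(t+1)$, strengthening the symbol link $A_1(t+1)\to S_\sigma$. The lower bound $\beta \ge \sqrt{\ln n / kp}$ ensures all three links reach recall strength within the $T$ available rounds, while the upper bound on $\beta$, combined with the bounded pairwise overlap of the input caps $Z_t$, keeps $A_1(t+1)$ from overlapping any previously created assembly in $H_{a-1}$ (which is reused every third tape position) by more than $6\ln n$, preserving the invariant and establishing property (i).

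I expect the add step to be the main obstacle, since it must simultaneously (a) form a \emph{new} stable assembly essentially from scratch, (b) wire it with the correct \emph{directionality} --- forward to the old head but never backward --- and (c) attach it to the correct symbol, all within the same $T$-round window and without corrupting the $\Theta(L)$ assemblies already present in the same three areas. The directionality is handled structurally by the cyclic, one-directional fibers among $H_1,H_2,H_3$, while the stability, linking speed, and low overlap all reduce to the concentration estimates already developed for Theorem \ref{thm:simple} and the scaffold theorem; the two-sided constraint on $\beta$ is exactly what reconciles ``link fast enough to recall'' with ``assemblies stay nearly disjoint.'' Finally, since each operation creates or recalls at most one assembly and there are $L \le n/2k$ operations, a union bound over the $L$ steps promotes the per-step high-probability guarantees to a single high-probability guarantee for the entire simulation, yielding that the current symbol's assembly $S_\sigma$ fires on round $tT+1$ for every $t$.
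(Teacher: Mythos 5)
Your proposal follows essentially the same route as the paper's proof: induction over the sequence of operations, with delete handled almost trivially by the interneuron disinhibition plus already-established forward and symbol links, the add step as the crux (a fresh assembly seeded by the external caps, forward linking enforced by the unidirectional cyclic fibers $H_{a-1}\to H_a$ so no backward link can form, symbol linking by co-firing with $S_\sigma$, and the two-sided constraint on $\beta$ reconciling linking speed with the $6\ln n$ overlap invariant), finished by a union bound over the $L$ operations. The only cosmetic differences are that you invoke Theorem \ref{thm:simple} for assembly formation and stability where the paper appeals to the projection/stabilization result of \citet{papadimitriou2019random} and explicitly verifies via an input-comparison bound that the old head assembly keeps firing throughout an add operation; the substance is identical.
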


Now, by combining simulations of each tape half, we can simulate the entire tape. The symbol under the tape head is represented by the symbol at the beginning of the right half of the tape. A rightward movement of the TM is simulated by removing the first symbol from the right half of the tape, and adding the (potentially changed) symbol to the left half of the tape. A leftward movement is simulated by removing the first symbol from the left half and adding it to the right half (recall that the symbol does not change in a leftward movement). As a consequence, \name\ is Turing complete.

The simulation consists of six areas $H_i^L, H_i^R, i=1,2,3$, three each for the ``left'' and ``right'' halves of the tape, a control area $D$ for both halves jointly, and the three areas $I, S, A$ for the FSM, which yields a total of 10 areas ($2 \times 3 + 1 + 3$). The outline of its operation is as follows: The FSM simulation proceeds as in Theorem 4, modified so that if $\delta(q, \sigma) = (p, \rho, d)$, the assembly $A_{q, \sigma}$ causes $S_p$, $I_\rho$, and $D_d$ to fire, where $d \in \{L, R\}$. The tape halves are simulated as in Lemma \ref{lemma:stack}, where the firing of $D_L$ signals a ``delete'' operation for the left tape and an ``add'' operation for the right tape (and vice versa). The new assembly on the right half becomes linked to $I_\rho$ by their concurrent firing, which effects a leftward movement of the tape head (and similarly for $D_R$).

\begin{theorem} \label{thm:turing}[TM Simulation]
    With plasticity
    \[\sqrt{\frac{\ln n}{kp}} \le \beta \le \frac{\ln \frac{n}{2kT}}{72 \log^2 n } \]
    and $kp \ge 3\ln n$, w.h.p. \name\ can simulate a Turing machine $M = (Q, \Sigma, \{L, R\}, q_0, q_A, q_R, \delta)$ which uses $T$ time in time $O(T)$ using ten brain areas, each of size $n \ge 2\max\{T^2, |Q|^2|\Sigma|^2\}$. 
\end{theorem}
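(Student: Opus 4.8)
The plan is to establish Theorem~\ref{thm:turing} by \emph{composition}: the three building blocks already in hand --- the finite-state-machine simulation of Theorem~\ref{thm:fsm}, the half-tape simulation of Lemma~\ref{lemma:stack}, and the alternation mechanism of Lemma~\ref{lemma:inter} --- are wired into a single network and shown to faithfully track the configuration of $M$. Concretely, I would fix the ten areas $I, S, A$ (realizing the head as a finite-state transducer) and $H_1^L, H_2^L, H_3^L, H_1^R, H_2^R, H_3^R$ (three per tape half), together with the joint control area $D$, and then maintain the following invariant throughout the run: after the \name\ rounds corresponding to TM step $t$, (i) the assembly firing in $S$ is $S_{q_t}$, where $q_t$ is the current TM state; (ii) the assemblies of the left half-tape, read outward from the head, are exactly the symbols to the left of the head, and likewise for the right half, with the symbol currently under the head sitting at the beginning of the right half; and (iii) firing the beginning assembly of the right half causes the corresponding symbol assembly to fire, so that the head ``reads'' $\sigma$ into $I_\sigma$.

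First I would implement a single transition. By the transducer remark following Theorem~\ref{thm:fsm}, once $S_{q_t}$ and $I_\sigma$ fire together, $A_{q_t,\sigma}$ fires one round later and in turn drives $S_{q_{t+1}}$, the symbol-to-write assembly $M_\rho$, and the direction assembly $D_d \in \{D_L, D_R\}$, where $\delta(q_t,\sigma)=(q_{t+1},\rho,d)$. The direction assembly is then routed to the two half-tape controllers exactly as prescribed: a rightward move is a ``delete'' of $\sigma$ from the right half and an ``add'' of $\rho$ to the left half, while a leftward move is a ``delete'' from the left half and an ``add'' to the right half --- and here the stipulation $\rho=\sigma$ for leftward moves is precisely what the ``add'' primitive of Lemma~\ref{lemma:stack} needs, since that primitive links the newly created assembly to whichever symbol assembly is co-firing. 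Feeding $C_{\text{add}}$/$C_{\text{delete}}$ and the appropriate symbol assembly into the two instances of Lemma~\ref{lemma:stack} then performs the update and re-establishes invariant (ii)--(iii) for step $t+1$.

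Next I would handle timing and the probabilistic composition. Each TM step is realized by a fixed, constant number of \name\ rounds: the alternation of Lemma~\ref{lemma:inter} contributes $O(1)$ rounds, and the per-operation round count of Lemma~\ref{lemma:stack} is also $O(1)$ --- indeed the lower bound $\beta \ge \sqrt{\ln n / kp}$ is exactly what forces an add/delete to complete in $O(1)$ rounds, so a $T$-step computation is simulated in $O(T)$ rounds. A $T$-step TM visits at most $T$ tape squares, so each half-tape hosts a sequence of at most $L \le T$ assemblies; the constraint $L \le n/2k$ of Lemma~\ref{lemma:stack} and the FSM requirement $n \ge |Q|^2|\Sigma|^2$ of Theorem~\ref{thm:fsm} are simultaneously met by $n \ge 2\max\{T^2, |Q|^2|\Sigma|^2\}$ in the relevant regime $k \le T$. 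Each individual add, delete, and transition succeeds with failure probability polynomially small in $n$; since $n \ge 2T^2$, a union bound over the $O(T)$ operations still fails only with probability $o(1)$, yielding the claimed w.h.p. guarantee.

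The main obstacle, I expect, is not any single block but the \emph{interface and synchronization} among them. The delicate points are: (a) ensuring the control wiring from $D_L, D_R$ to the six half-tape areas triggers ``delete'' on one half and ``add'' on the other in lockstep, with no spurious firing while the FSM block is mid-transition --- this is where the separation of interneuron timescales and the alternation lemma must be invoked with care; (b) verifying that the symbol crossing the boundary (the just-read $\sigma$ on a left move, or the just-written $\rho$ on a right move) is the one co-firing at the moment the new assembly is created, so the freshly added assembly inherits the correct symbol link; and (c) keeping the per-step error bounds uniform enough that the union bound over $T$ steps goes through without the accumulated overlap between successive tape assemblies degrading recall. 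I would spend most of the effort pinning down the control logic in (a) and the symbol hand-off in (b), since the remaining estimates follow directly from Theorem~\ref{thm:fsm} and Lemma~\ref{lemma:stack}.
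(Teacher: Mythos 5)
Your proposal matches the paper's proof essentially step for step: the paper likewise assembles the ten areas ($I,S,A$, the six half-tape areas $H^L_i, H^R_i$, and the joint control area $D$), runs the FSM simulation of Theorem~\ref{thm:fsm} modified so that $A_{q,\sigma}$ drives the next-state, write-symbol, and direction assemblies, configures the two tape halves via Lemma~\ref{lemma:stack} with $D_L$/$D_R$ triggering exactly the delete-on-one-half/add-on-the-other pairing you describe (using $\rho=\sigma$ on left moves), and concludes by intersecting the w.h.p.\ guarantees of the component results. Your handling of the parameter regime ($k \le T$ so that $n \ge 2T^2$ gives $L \le n/2k$) and of the symbol hand-off is, if anything, more explicit than the paper's own very brief composition argument.
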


\section{Experiments}
To support our theoretical results, we have simulated the sequence and FSM models extensively. We detail the results and conclusions of these various experiments here.

\begin{figure}[h]
    \centering
    \includegraphics[width=\textwidth]{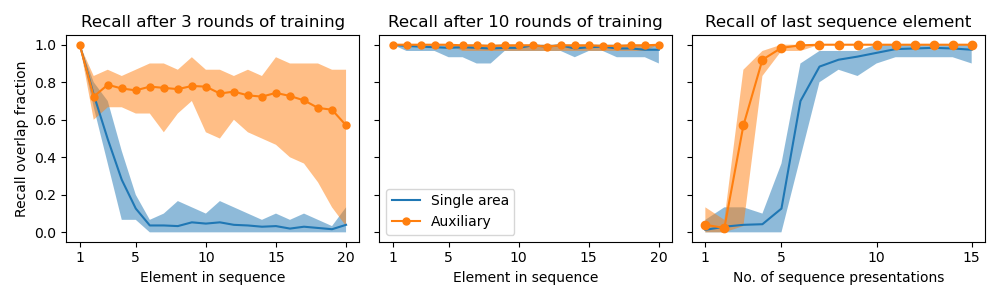}
    \caption{Simple vs scaffold sequence memorization:  we examine recall of the sequence in response to presentation of only the first item after 3 (left) and 10 (center) presentations of the sequence, by measuring the fraction of neurons in the assembly corresponding to a given element of the sequence which fire during testing. On the right, we demonstrate how the recall of the last element of the sequence (again, after only the first is presented) over the course of training. Here, $n=1000, k=30, p=0.2, \beta=0.1$, and the sequence is length $20$. Dark center line is the mean over 10 trials, while shaded area is the range.}
    \label{fig:seq_compare}
\end{figure}

\paragraph{Number of presentations versus recall.} In Figure \ref{fig:seq_compare} we show how recall of a sequence improves over the course of repeated presentations, for both the single-area and scaffolded models. Significantly, the faster rate of memorization observed for the scaffolded model supports the bound given in Theorem \ref{thm:scaffold}. Here, we estimate the assembly corresponding to a given element in the sequence as the set of neurons which fire on the last round of training, and measure recall as the fraction of neurons in that assembly which fire at the appropriate time when only the first element of the sequence is presented. 

\begin{figure}[h]
    \centering
    \includegraphics[width=0.4\textwidth]{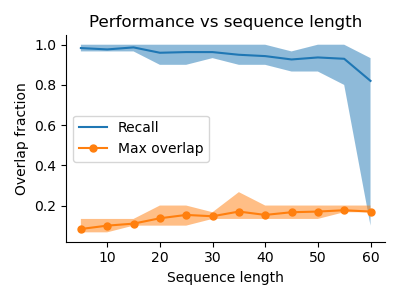}
    \caption{For each choice of sequence length, we train the model via repeated presentations of a sequence of the given length. In blue we plot the recall of the last element in the sequence, and in orange we plot the maximum overlap of assemblies corresponding to distinct sequence elements. Here, $n=1000, k=30, p=0.2, \beta=0.1$, and each sequence is trained via 10 presentations. Dark center line is the mean over 10 trials, while shaded area is the range.}
    \label{fig:seq_len}
\end{figure}

\paragraph{Sequence memorization capacity.} In Figure \ref{fig:seq_len}, we increase the length of the sequence to be memorized and measure both the recall of the sequence and the maximum overlap between assemblies corresponding to different sequence elements. This disambiguates two failure modes of sequence learning, where either different neurons fire during testing versus training or the assemblies representing different sequence elements become indistinguishable. Notably, the capacity of a given brain area seems to greatly exceed the bound we give in Theorem \ref{thm:simple}, and indeed even surpasses the quantity $n/k$ which is the largest number of disjoint $k$-caps which can be formed in a given area. This suggests that sequence learning in \name\ is significantly more robust than our analysis implies.

\paragraph{Effect of parameters on sequence memorization.} We examine the effect of area size (Figure \ref{fig:seqvary}, left) and edge density (Figure \ref{fig:seqvary}, right) on the success of sequence memorization. In each case, we vary the relevant parameter while training the model to memorize a length $25$ sequence, and measure the fraction of neurons in the last assembly of the sequence which fire at the appropriate time when only the first element of the sequence is presented (\emph{recall}), and the largest fraction of overlap between any two assemblies corresponding to distinct elements (\emph{max overlap}). High recall indicates that the neural responses to the sequence of stimuli have been memorized, while low max overlap indicates that the assemblies corresponding to any pair of distinct sequence elements are distinguishable. Our analytical results (where the probability of success increases with $n$, and we require the product $kp$ to be sufficiently large) predict that increasing either parameters should improve performance (i.e. increase recall and decrease overlap) which is indeed observed experimentally. Notably, reasonably good performance is attained for substantially larger ranges of $n$ and $p$ than we require in the theorems.

\begin{figure}[h!]
    \centering
    \includegraphics[width=0.45\linewidth]{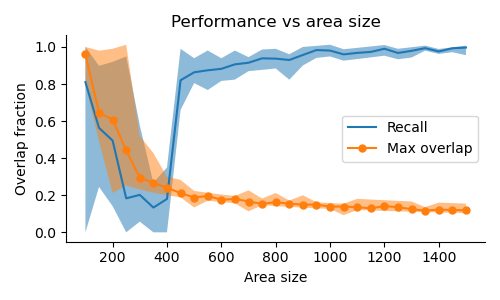}
    \hspace{0.0\linewidth}
    \includegraphics[width=0.45\linewidth]{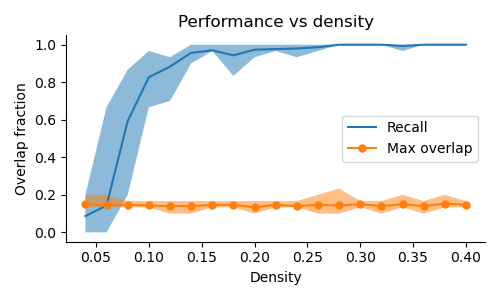}
    \caption{We measure performance of sequence memorization as two key parameters of the model (area size and density of connections) are varied. On the left, we vary $n$ from $100$ to $1500$ while $k = \sqrt{n}$, $p=0.2$, $\beta = 0.1$. On the right, we vary $p$ from $0.04$ to $0.4$ while $n = 1000, k=30, \beta = 0.1$. In both cases the sequence to be memorized is length $25$ and training consists of 10 presentations. Dark center line is the average over 10 trials, while shaded area is the range.}
    \label{fig:seqvary}
\end{figure}

\paragraph{FSM memorization.} In Figure \ref{fig:fsm_emp} we demonstrate how the performance of each transition of the FSM improves over the course of training, and how performance degrades as the size of the FSM (i.e. total number of transitions) increases, while all parameters of the model are fixed. We measure performance by the recall averaged over all transitions, where here the recall is the fraction of neurons in the appropriate next state assembly which fire two rounds after the assemblies corresponding to a given state/symbol pair are made to fire.

\begin{figure}
    \centering
    \begin{subfigure}[h]{0.4\textwidth}
        \includegraphics[width=\textwidth]{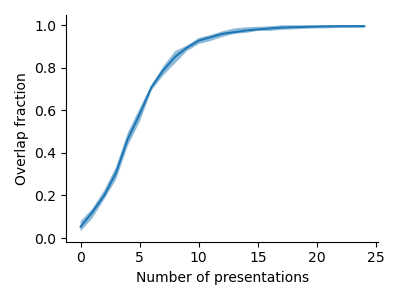}
        \caption{Recall over training the FSM}
    \end{subfigure}
    \hspace{0.05\textwidth}
    \begin{subfigure}[h]{0.4\textwidth}
        \includegraphics[width=\textwidth]{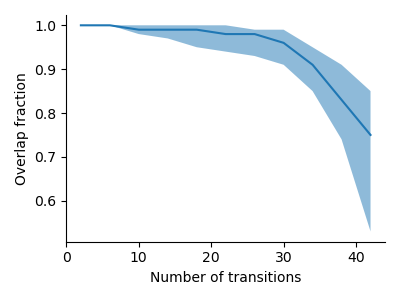}
        \caption{Performance versus size of FSM}
    \end{subfigure}
    \caption{In (a), we train the model by repeatedly presenting each transition of the FSM in Figure \ref{fig:fsm} and measure performance after a given number of presentations. In (b), we train the model using FSMs of different sizes and again measure the recall. Here, $n=5000, k=70, p=0.4, \beta=0.1$ and the model is trained with 15 presentations of each transition.    
    Dark center line indicates mean over 10 trials; shaded area indicates the range.}
    \label{fig:fsm_emp}
\end{figure}

\paragraph{Effect of parameters on FSM memorization.} We examine the effect of area size (Figure \ref{fig:fsmvary}, left) and edge density (Figure \ref{fig:fsmvary}, right) on the success of FSM memorization. Using the FSM simulated in Figure 4 (see Figure \ref{fig:fsm_example} for a diagram), we vary the relevant parameter while training the model to memorize each transition of this FSM. We measure performance through the fraction of neurons in the appropriate next state assembly which fire two rounds after the assemblies corresponding to a given state/symbol pair are made to fire (\emph{recall}), with the minimum taken over all transitions, and the fraction of a random sample of length $20$ input strings which are correctly accepted or rejected (\emph{classification accuracy}), in the sense that the state assembly with the most neurons firing on the last round corresponds to the correct terminal state. Our analytical results (where again the probability of success increases with $n$, and we require the product $kp$ to be sufficiently large) predict that increasing either $n$ or $p$ should improve performance (i.e. increase both recall and classification accuracy) which is observed experimentally. Notably, classification accuracy reaches a perfect score much sooner than recall.

\begin{figure}[h!]
    \centering
    \includegraphics[width=0.45\linewidth]{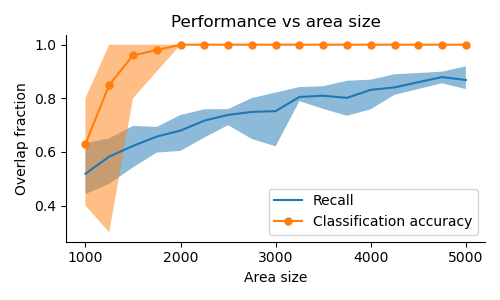}
    \includegraphics[width=0.45\linewidth]{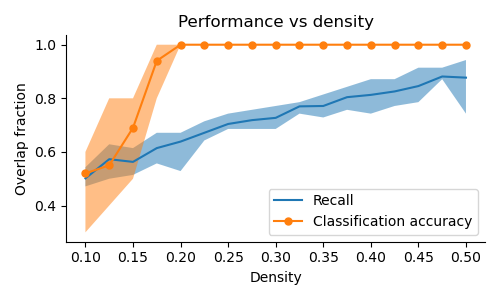}
    \caption{We measure performance of FSM memorization as two key parameters of the model (area size and density of connections) are varied.  On the left, we vary $n$ from $1000$ to $5000$ while $k=\sqrt{n}$, $p=0.5, \beta = 0.1$. On the right, we vary $p$ from $0.1$ to $0.5$ while $n = 5000, k = 70, \beta = 0.1$. In both cases the FSM is given in Figure \ref{fig:fsm_example} (with 11 symbols and 3 non-terminal states) and training consists of 15 presentations of each transition. Dark center line is the average over 10 trials, while shaded area is the range.}
    \label{fig:fsmvary}
\end{figure}

\paragraph{Effect of input string length on FSM simulation.} We examine the effect of the length of the input string on the accuracy of the simulation of the FSM on it (Figure \ref{fig:fsmlen}). For a few choices of the key parameters (area size $n$ and density $p$), we train the model to memorize each transition of the FSM in Figure \ref{fig:fsm_example} and then test its performance on a random sample of input strings of various sizes. As above, we measure performance via \emph{classification accuracy}, the fraction of strings which are correctly accepted or rejected (i.e. the state assembly with the most neurons firing on the last round corresponds to the correct terminal state). For $n$ and $p$ such that all transitions are performed accurately, we expect that classification accuracy should be very high even for long strings, while for smaller $n$ and $p$ classification accuracy should decay with string length. This is largely supported by the simulations, where classification accuracy remains nearly perfect for all string lengths for the larger choices of $n$ and $p$ and decays somewhat for the smaller choices, although notably even for long strings the classification accuracy is still substantially higher than chance (which would be $0.2$ with the $5$ states of the FSM).

\begin{figure}[h!]
    \centering
    \includegraphics[width=0.45\linewidth]{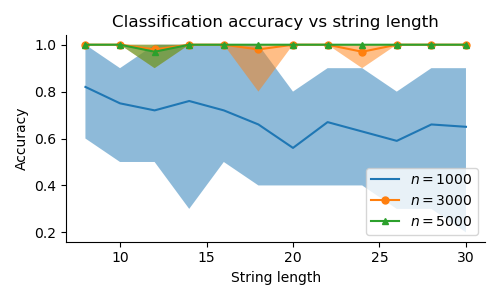}
    \includegraphics[width=0.45\linewidth]{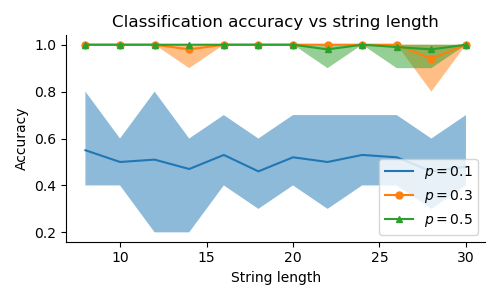}
    \caption{We measure performance of FSM simulation as the length of the input string increases, for a few choices of area size (left) and edge density (right). On the left, $k = \sqrt{n}, p=0.5, \beta=0.1$; on the right, $n=5000, k=70, \beta=0.1$. In both cases the FSM is given in Figure \ref{fig:fsm_example} (with 11 symbols and 3 non-terminal states) and training consists of 15 presentations of each transition. Dark center line is the average over 10 trials, while shaded area is the range.}
    \label{fig:fsmlen}
\end{figure}

\section{Proofs}\label{sec: proofs}

\paragraph{Proof sketches.}
All of our proofs rely on a few basic ingredients. A key observation underpinning all of them is that in the range of parameters assumed, the neurons which activate in response to a given sequence element/transition do not change over the course of training w.h.p. Intuitively, if the weights change sufficiently slowly, then repeated presentations of the same stimulus should activate the same set of neurons. We then show that the overlap of assemblies corresponding to distinct sequence elements/transitions will not be too large, which will allow them to be distinguished and ensures that increasing weights between one pair of assemblies will not interfere with others. Finally, we bound the number of rounds of training needed to ensure that recall occurs successfully, which simply involves comparing the input a neuron in the correct assembly will receive versus a neuron outside of the correct assembly, and choosing the weight to be sufficiently high so that the neurons in the correct assembly have the highest input w.h.p.

To proceed with the proofs in full, we first need a few crucial lemmas. Given a pair of inputs to a brain area, Lemma \ref{lemma:overlap} provides an upper bound (which depends on the overlap of these inputs) on the overlap of the sets of neurons which fire in response to them, when plasticity is in effect.

\begin{lemma} \label{lemma:overlap}
Let $\delta > 0$. Let $I_1, I_2$ be sets of neurons providing input to brain area $A$. (Note that $I_1, I_2$ might intersect $A$.) Suppose that $I_1$ fires, forming a cap $C_1$, with weights from $I_1$ to $C_1$ increased by $1 + \gamma$.
Some time later $I_2$ fires, forming a cap $C_2$. Suppose that $|I_1|, |I_2| \ge k$, and that $kp \ge 6\ln \frac{n}{k}$. Then if $|I_1 \cap I_2| = 0$, we have
\[\Pr\left[|C_1 \cap C_2| \ge 2 \max \{\tfrac{k^2}{n}, 3\ln \tfrac{1}{\delta}\}\right] \le \delta\]
and otherwise, for $\epsilon \ge k / n$ and
\[\gamma \le \frac{\sqrt{3|I_2|p\ln \frac{n}{k}} - \max \{\sqrt{3|I_1 \cap I_2|p\ln \frac{2n}{\delta}}, 3\ln \frac{2n}{\delta}\} - \sqrt{3|I_2 \setminus I_1|p\ln \frac{2}{\epsilon}}}{2\max \{|I_1 \cap I_2|p, 3\ln \frac{2n}{\delta}\}}\]
we have 
\[\Pr\left[|C_1 \cap C_2| \ge \max \left\{\epsilon k, 6\ln \tfrac{2}{\delta}\right\}\right] \le \delta\]
\end{lemma}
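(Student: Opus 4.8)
The plan is to reduce the overlap bound to a threshold-counting argument on the inputs neurons receive during the two firings. When $I_1$ fires, each neuron $a\in A$ receives input $X_a$ equal to the number of present edges from $I_1$ to $a$, i.e.\ $\mathrm{Bin}(|I_1|,p)$, and $C_1$ is the top $k$; plasticity then scales each edge from $I_1$ into $C_1$ by $1+\gamma$. When $I_2$ fires, a neuron $a$ receives $Y_a = B_a + (1+\gamma)M_a$ if $a\in C_1$ and $Y_a = B_a + M_a$ otherwise, where $B_a\sim\mathrm{Bin}(|I_2\setminus I_1|,p)$ counts edges from $I_2\setminus I_1$ and $M_a\sim\mathrm{Bin}(|I_1\cap I_2|,p)$ counts the boostable edges from $I_1\cap I_2$. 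The disjoint case $|I_1\cap I_2|=0$ is the warm-up: there is no boost, so $Y_a=U_a:=\mathrm{Bin}(|I_2|,p)$ for every $a$, and since the $I_2$-edges are independent of the $I_1$-edges that determine $C_1$, the cap $C_2$ is independent of $C_1$ and, by symmetry, a uniform $k$-subset. Then $|C_1\cap C_2|$ is hypergeometric with mean $k^2/n$, and the two regimes of $2\max\{k^2/n,3\ln(1/\delta)\}$ follow from the multiplicative Chernoff bound (when $k^2/n\ge 3\ln(1/\delta)$) and the ``$\ge 2e\mu$'' tail bound (otherwise).

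For the overlapping case the strategy has two parts: (a) lower-bound the $C_2$-cutoff (the $k$-th largest $Y$-value) by a threshold $\theta$, so that a $C_1$-neuron can only enter $C_2$ by exceeding $\theta$; and (b) show few $C_1$-neurons do. For (a), since $Y_a\ge U_a$ and the unboosted inputs $U_a=\mathrm{Bin}(|I_2|,p)$ are i.i.d.\ and independent of $C_1$, the $k$-th largest $Y$ is at least the $k$-th largest $U$; a reverse-Chernoff estimate on $n$ i.i.d.\ binomials with mean $|I_2|p\ge kp\ge 6\ln(n/k)$ shows that at least $k$ of them exceed a level $\theta$ sitting a $\sqrt{3|I_2|p\ln(n/k)}$-sized headroom above the base mean $|I_2|p$, giving cutoff $\ge\theta$ and hence $C_1\cap C_2\subseteq\{a\in C_1: Y_a\ge\theta\}$.

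The heart of the argument is (b), and the key difficulty is a conditioning subtlety: membership in $C_1$ is positively correlated with $M_a$, since the shared edges from $I_1\cap I_2$ also feed $X_a$, so $M_a$ cannot be treated as an unconditional binomial for $a\in C_1$. I would circumvent this by controlling the boost uniformly over all neurons: define the good event $G=\{M_a\le M^+\ \forall a\}$ with $M^+=|I_1\cap I_2|p+\max\{\sqrt{3|I_1\cap I_2|p\ln(2n/\delta)},\,3\ln(2n/\delta)\}\le 2\max\{|I_1\cap I_2|p,\,3\ln(2n/\delta)\}$, which holds with probability $\ge 1-\delta/2$ by a union bound and, being quantified over all neurons, is insensitive to conditioning on $C_1$. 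Under $G$, for $a\in C_1$ we have $Y_a\le B_a+(1+\gamma)M^+$, so $\{Y_a\ge\theta\}\subseteq\{B_a\ge\theta-(1+\gamma)M^+\}$, and crucially $B_a$ depends only on the $I_2\setminus I_1$-edges and is therefore independent of $C_1$. A forward Chernoff bound gives $\Pr[B_a\ge\theta-(1+\gamma)M^+]\le\epsilon/2$ exactly when $\theta-(1+\gamma)M^+\ge|I_2\setminus I_1|p+\sqrt{3|I_2\setminus I_1|p\ln(2/\epsilon)}$; substituting $\theta$ and $M^+$ and rearranging turns this into precisely the stated bound on $\gamma$, whose numerator decomposes as (cutoff headroom $\sqrt{3|I_2|p\ln(n/k)}$) $-$ (boost fluctuation $\max\{\sqrt{3|I_1\cap I_2|p\ln(2n/\delta)},3\ln(2n/\delta)\}$) $-$ (base deviation $\sqrt{3|I_2\setminus I_1|p\ln(2/\epsilon)}$), over the boost magnitude $2\max\{|I_1\cap I_2|p,3\ln(2n/\delta)\}$.

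Finally, to pass from the per-neuron tail bound to the count, I would condition on $C_1$: since $C_1$ is a function of the $I_1$-edges alone, the variables $\{B_a\}_{a\in C_1}$ remain i.i.d.\ $\mathrm{Bin}(|I_2\setminus I_1|,p)$ after conditioning, so $\sum_{a\in C_1}\1\{B_a\ge\theta-(1+\gamma)M^+\}$ is a sum of independent Bernoullis with mean $\le\epsilon k/2$, and a Chernoff bound yields $\Pr[\,\cdot\,\ge\max\{\epsilon k,6\ln(2/\delta)\}]\le\delta/2$; combining with $G$ and step (a) gives the claim, while the hypothesis $\epsilon\ge k/n$ ensures $\epsilon k$ dominates the baseline $k^2/n$ overlap. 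The main obstacle, as flagged, is the correlation between $C_1$-membership and the shared edges carrying the boost, which I resolve by the uniform event $G$ and by reducing both the per-neuron tail and the final concentration to the $C_1$-independent part $B_a$; a secondary source of friction is obtaining the reverse-Chernoff lower bound on the cutoff with constants compatible with the factor-$3$ Chernoff form used in the three numerator terms, which I treat here as schematic rather than tracking exactly.
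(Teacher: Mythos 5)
You could not have known this, but the paper never actually proves this lemma: Section~\ref{sec: proofs} states Lemmas \ref{lemma:overlap}--\ref{lemma:expsoln} as supporting tools and then proves only the theorems and the tape-simulation lemma, so there is no reference proof to measure yours against. The best available check is consistency with how the paper uses these quantities elsewhere, and by that check your proposal is almost certainly the intended argument: your cutoff threshold $\theta = |I_2|p + \sqrt{3|I_2|p\ln(n/k)}$ is exactly the cap-membership bound asserted inside the paper's proof of Theorem~\ref{thm:simple} (namely $e(S_1,i)\ge kp+\sqrt{3kp\ln\frac{n}{k}}$ for $i\in A_1(1)$), and your chain --- uniform bound $M^+\le 2\max\{|I_1\cap I_2|p,\,3\ln\frac{2n}{\delta}\}$ on the boosted shared edges via the event $G$, reduction to the $C_1$-independent binomial $B_a$, per-neuron Chernoff at level $\epsilon/2$, then a counting Chernoff over $C_1$ --- reproduces the lemma's bound on $\gamma$ term by term, numerator and denominator. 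Your handling of the conditioning problem (the correlation between $C_1$-membership and the shared edges) via a uniform, unconditional event is the right move and is what makes the reduction to $B_a$ legitimate.

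Three caveats, in decreasing order of importance. (1) The step you flag as schematic is the one place the argument can genuinely break: for the $C_2$ cutoff to exceed $\theta$ you need roughly $n\Pr[\mathrm{Bin}(|I_2|,p)\ge |I_2|p+t]$ to be at least $k$, and since this tail is $\approx \exp(-t^2/(2|I_2|p(1-p)))$, that forces $t\le\sqrt{2(1-p)|I_2|p\ln(n/k)}$; your (and the lemma's) choice $t=\sqrt{3|I_2|p\ln(n/k)}$ sits above this, so as literally stated the event ``at least $k$ neurons clear $\theta$'' fails w.h.p. The paper's own Theorem~\ref{thm:simple} proof commits the same imprecision, so this is an inherited constant error rather than a flaw of your strategy, but a rigorous write-up needs the headroom constant below $2(1-p)$, with the lemma's numerator adjusted accordingly. (2) Your failure-probability budget overspends: $\delta/2$ for $G$ plus $\delta/2$ for the final count plus the (unquantified) failure probability of the cutoff event exceeds $\delta$; reallocate, e.g.\ $\delta/2,\delta/4,\delta/4$, which slightly perturbs the logarithms' arguments. (3) Your claim in step (a) that the $U_a$ are independent of $C_1$ is false --- the shared edges feeding $M_a$ also feed $X_a$ --- but harmless, since lower-bounding the $k$-th order statistic of the i.i.d.\ family $\{U_a\}_{a\in A}$ is an unconditional statement, which is all you actually use.
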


Lemma \ref{lemma:neuronlottery} controls the number of caps a given neuron will enter in response to a sequence of stimuli with bounded overlap.

\begin{lemma} \label{lemma:neuronlottery}
Let $I_1, \ldots, I_L$ be a set of inputs to a brain area $A$ (which might intersect $A$) which are presented in sequence, satisfying $|I_1| \ge k, |I_t| = m \ge k$ for all $t > 1$, $|I_s \cap I_t| \le \Delta$ for all $s \neq t$, and 
\[L \le 3 \ln n \left(\frac{n}{k}\right)^{(1 - \sqrt{\frac{\Delta \ln n}{m \ln \frac{n}{k}}})^2} \] Let $C_t$ denote the cap denoting from $I_t$, and $\1_t$ denote the indicator vector for $C_t$. 
Then for 
\[\beta \le \frac{ (1 - \sqrt{\frac{\Delta \ln n}{m \ln \frac{n}{k}}})^2 \ln \frac{n}{k} - \ln L - \frac{1}{3}}{2 (\max \{\Delta p, 6 \ln n\})^2 }\]
and $kp \ge 3 \ln n$ we have
\[\sum_{t=1}^L \1_t(i) < 6 \ln n \]
w.h.p. for all $i \in A$.
\end{lemma}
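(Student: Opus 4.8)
The plan is to fix a single neuron $i \in A$, bound the number of caps it enters, and then union bound over all $n$ neurons. The central device is a \emph{bootstrapping} argument: I will show that, so long as $i$ has entered fewer than $6\ln n$ caps, it can enter a new cap $C_t$ only when its \emph{baseline} input — the input it would receive from $I_t$ if all weights equalled $1$ — is large, and that such ``baseline-high'' events are too rare over the whole sequence to accumulate to $6\ln n$. The plasticity boost that $i$ has earned from earlier wins is what could in principle let it win with a small baseline input, so the whole argument hinges on showing that boost stays small precisely because the number of prior wins is small.

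First I would set up the threshold. For $t > 1$ the baseline input is $X_{i,t} = \sum_{j \in I_t} A_{ji}$, a sum of $m$ independent $\mathrm{Bernoulli}(p)$ variables, where $A_{ji}$ indicates the synapse $j \to i$. Writing $u = \sqrt{\frac{\Delta \ln n}{m \ln (n/k)}}$, I would fix a threshold $\theta = mp + (1-u)\sqrt{2mp\ln(n/k)}$, calibrated by a Chernoff bound so that $q := \Pr[X_{i,t} \ge \theta]$ is at most about $(k/n)^{(1-u)^2}$; the exponent $(1-u)^2$ and the hypothesis $L \le 3\ln n\,(n/k)^{(1-u)^2}$ are chosen exactly so that $Lq = O(\ln n)$. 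Next I would argue that every win forces $X_{i,t} \ge \theta$ whenever $i$ has fewer than $6\ln n$ prior wins. Since plasticity only increases weights within a round (homeostasis is applied between rounds), every neuron's total input dominates its baseline input, so the actual cap threshold $\tau_t$ is at least the $k$-th largest baseline input, which w.h.p. is $\tau^{\mathrm{nat}} \approx mp + \sqrt{2mp\ln(n/k)}$; hence $\tau_t$ exceeds $\theta$ by a slack of order $\sqrt{2\Delta p\ln n}$. If $i$ has won $r < 6\ln n$ caps, the only boosted synapses into $i$ from $I_t$ come from $I_t \cap \bigcup_\ell I_{t_\ell}$, of total size at most $r\Delta$, each of weight at most $(1+\beta)^r$, so the accumulated boost is on the order of $\beta(1+\beta)^r r\Delta p$. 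The upper bound on $\beta$ — with its $(\max\{\Delta p, 6\ln n\})^2$ denominator — is exactly what keeps this boost below the slack, so a below-$\theta$ neuron cannot be lifted into the cap.

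Finally I would count baseline-high events by bounding $\Pr[\sum_t \mathbbm{1}[X_{i,t} \ge \theta] \ge 6\ln n]$ with a union bound over the $\binom{L}{6\ln n}$ choices of which inputs are high. For a fixed set of $6\ln n$ indices I would split each $I_{t_\ell}$ into a fresh part, disjoint from the other chosen inputs, and a shared part of size at most $6\ln n \cdot \Delta$; the fresh contributions are independent across $\ell$, so the joint probability is at most roughly $q^{6\ln n}$, with the small shift from the shared parts absorbed into the slack in $\theta$. This yields $\binom{L}{6\ln n} q^{6\ln n} \le (eLq/6\ln n)^{6\ln n} = (e/6)^{6\ln n} = n^{-\Omega(1)}$. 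The bootstrap then closes: if $i$ entered $6\ln n$ caps, each of its first $6\ln n$ wins occurred while it still had fewer than $6\ln n$ prior wins, hence each forced a baseline-high event, contradicting the count; a union bound over the $n$ neurons finishes the proof.

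I expect the main obstacle to be the coupled calibration in the middle step. The boost bound relies on the inductive hypothesis that $i$ has fewer than $6\ln n$ prior wins, which is the very conclusion, so the argument must be arranged so the bootstrap closes without circularity; and the slack carved out of $\theta$ by the $(1-u)$ factor must be made to strictly exceed the boost, which is what forces the tight, interlocking constraints among $L$, $\beta$, and the threshold exponent $(1-u)^2$. The second delicate point is handling the dependence among the $X_{i,t}$ — the inputs overlap in up to $\Delta$ neurons — cleanly inside the union bound, i.e. verifying that the shared parts are small enough to be swallowed by the slack so that the events behave as if independent.
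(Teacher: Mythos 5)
The paper states Lemma~\ref{lemma:neuronlottery} without proof (only the theorems and Lemma~\ref{lemma:stack} receive proofs in the text), so there is no in-paper argument to compare against; the following assesses your proposal on its own terms. Your skeleton --- bootstrap on the number of prior wins, show that a win with few priors forces a high \emph{baseline} (weight-one) input, then count baseline-high events as a statement about the random graph alone --- is a sensible architecture, and phrasing the counting step so that it involves no process conditioning is the right way to dodge the worst dependence problems.

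The genuine gap is the final counting step, and it is not a fixable constant-chasing issue within your framework. Your own calibration gives $q \approx (k/n)^{(1-u)^2}$, and this cannot be improved: since $\theta$ must sit below the natural cap threshold, necessarily $q \gtrsim k/n$, so with $L$ at its allowed maximum $Lq$ is genuinely of size $3\ln n$, not $\ln n$. Plugging $Lq = 3\ln n$ into your bound gives
\[
\binom{L}{6\ln n} q^{6 \ln n} \le \left(\frac{eLq}{6\ln n}\right)^{6\ln n} = \left(\frac{e}{2}\right)^{6\ln n} = n^{6(1-\ln 2)} \approx n^{1.84},
\]
which diverges; your figure $(e/6)^{6\ln n}$ silently substitutes $Lq \le \ln n$. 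The first-moment/subset union bound can never certify concentration at a threshold that is only twice the mean --- it needs the threshold to exceed $e$ times the mean, about $8.2 \ln n$ here --- so as written your argument proves at best a $C\ln n$ bound with $C \ge 9$ or $10$, not the stated $6\ln n$. Nor can you simply swap in a multiplicative Chernoff bound: the events $\{X_{i,t} \ge \theta\}$ are \emph{positively} correlated through shared edges (the subset union bound was precisely your device for that dependence), and even under exact independence Chernoff at twice the mean gives only $(e/4)^{3\ln n} = n^{-3(2\ln 2 - 1)} \approx n^{-1.16}$ per neuron, hence $n^{-0.16}$ after the union over $n$ neurons. The margin is that thin, which means there is essentially no room to ``absorb the shared parts into the slack'' as you propose; any constant loss in the exponent destroys the conclusion.

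Two further remarks. First, the shape of the lemma's hypothesis points to a different calibration that avoids this cliff: the numerator of the $\beta$ bound contains $-\ln L - \tfrac13$, which is the signature of an argument where the plasticity boost is absorbed into the \emph{exponent} of the per-round win probability and compared against $\ln L$, driving that probability down to $e^{-1/3}/L$; then the expected number of wins is $O(1)$ rather than $3\ln n$, and your subset union bound at $6\ln n$ closes with enormous slack. Second, your middle step (``boost below slack'') is asserted rather than checked: with the sharp accounting the boost is of order $\beta \ln n \,(\Delta p + \ln n)$, and whether this stays below $\sqrt{2\Delta p \ln n}$ for every $\beta$ the lemma permits is itself a tight comparison (it is borderline when $\Delta p \approx 6 \ln n$ and $L$ is small); under the restructured, exponent-based argument this additive comparison is not needed at all.
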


Lemma \ref{lemma:stable} provides a criterion to ensure that the same set of neurons will fire in response to the same stimulus.

\begin{lemma} \label{lemma:stable}
Let $\delta > 0$, and let $I$ be a set of neurons providing input to brain area $A$ with $|I|p \ge 3 \ln \frac{1}{\delta}$. Let $C \subseteq A$ be a set of $k$ neurons, where each neuron in $C$ has at least $d$ incoming edges from $I$, each strengthened by a factor of $1 + \gamma$. Suppose no neuron outside of $C$ has more than $m$ incoming connections from $C$ which have been strengthened, each by a factor no more than $1+c\gamma$. Then for 
\[\gamma \ge \frac{|I|p + \sqrt{3|I|p \ln \frac{n}{\delta}} - d}{d-cm}\]
if $I$ fires, the resulting cap $C'$ will equal $C$ w.p. $1 - \delta$
\end{lemma}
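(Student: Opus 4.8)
The plan is to reduce the claim to a deterministic comparison of total inputs and then handle the randomness with a single Chernoff-plus-union-bound argument. When $I$ fires, the new cap $C'$ is by definition the set of $k$ neurons in $A$ of largest total synaptic input, so to prove $C' = C$ it suffices to show that every neuron of $C$ receives strictly more input than every neuron in $A \setminus C$. I would therefore lower-bound the input to an arbitrary $v \in C$, upper-bound the input to an arbitrary $w \notin C$, and then impose that the former strictly exceeds the latter.

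The lower bound is deterministic: each $v \in C$ has at least $d$ incoming edges from $I$, each of weight $1 + \gamma$, and since all weights are nonnegative its total input when $I$ fires is at least $d(1+\gamma)$. For the upper bound, the number of edges from $I$ into a fixed $w \notin C$ is distributed as $\mathrm{Bin}(|I|, p)$ with mean $|I|p$, and a Chernoff bound gives that this count is at most $|I|p + \sqrt{3|I|p\ln(n/\delta)}$ except with probability at most $\delta/n$; the hypothesis $|I|p \ge 3\ln\frac{1}{\delta}$ is precisely what places the deviation term in the sub-Gaussian regime where this tail estimate applies. Of the edges from $I$ into $w$, at most $m$ are strengthened, each by a factor of at most $1 + c\gamma$, while the remainder carry weight $1$; hence the input to $w$ exceeds its raw edge count by at most $mc\gamma$, yielding
\[ \mathrm{input}(w) \le |I|p + \sqrt{3|I|p\ln(n/\delta)} + mc\gamma. \]
A union bound over the at most $n$ neurons outside $C$ makes this upper bound hold simultaneously for all of them with probability at least $1 - \delta$.

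It then remains to enforce $d(1+\gamma) > |I|p + \sqrt{3|I|p\ln(n/\delta)} + mc\gamma$. Rearranging and dividing by $d - cm$, which is positive under the implicit assumption $d > cm$ (without which the hypotheses cannot be met and the statement is vacuous), gives exactly the stated lower bound on $\gamma$. On the event that all the edge-count bounds hold, every neuron of $C$ strictly outranks every neuron outside $C$, so the cap formed is $C' = C$, and this event has probability at least $1 - \delta$.

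The delicate point is the bookkeeping for strengthened edges feeding neurons outside $C$: the hypotheses simultaneously cap their number (at most $m$) and their strength (at most $1 + c\gamma$), and it is exactly this joint control that prevents an outside neuron from accumulating enough strengthened input to displace a member of $C$. I expect this to be the main obstacle to state cleanly, since one must verify that the only strengthened edges that can fire into $w$ are the at-most-$m$ edges from $C$ accounted for in the hypothesis, so that the total excess over the base edge count is genuinely bounded by $mc\gamma$. The only truly probabilistic ingredient is the binomial concentration of edge counts; everything else is deterministic once that event is conditioned on.
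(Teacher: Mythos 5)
The paper states Lemma~\ref{lemma:stable} without proof, and your argument is correct and follows exactly the route sketched in the paper's ``Proof sketches'' paragraph: deterministically lower-bound the input to each neuron of $C$ by $d(1+\gamma)$, upper-bound the input to each outside neuron by binomial concentration of its edge count from $I$ plus the at-most-$mc\gamma$ excess from strengthened edges, union-bound over the $n$ neurons, and solve the resulting inequality for $\gamma$ (with $d > cm$ so the division is legitimate). The two small liberties you take --- reading the statement's ``incoming connections from $C$'' as connections from $I$ (evidently a typo, since only $I$ fires), and treating $|I|p \ge 3\ln\frac{1}{\delta}$ as sufficient for the sub-Gaussian tail at deviation $\sqrt{3|I|p\ln(n/\delta)}$, which strictly requires $|I|p \ge 3\ln\frac{n}{\delta}$ --- are inherited from the lemma statement itself rather than being gaps in your proof.
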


Lemma \ref{lemma:highoverlap} upper bounds the overlap of the caps for a pair of stimuli, at a larger overlap than Lemma \ref{lemma:overlap}.

\begin{lemma} \label{lemma:highoverlap}
Let $I_1$ and $I_2$ be two sets of neurons providing input to area $A$, with $|I_1| = |I_2| = 2k$ and $|I_1 \cap I_2| \le (1 + o(1))k$, and let $A_1, A_2$ denote their respective caps. Then
\[|A_1 \cap A_2| \le \frac{k}{2}\]
w.p. $1 - o(n^{-1})$.
\end{lemma}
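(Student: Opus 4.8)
The plan is to show that the expected overlap is in fact $o(k)$, which leaves ample room below the target $k/2$, and then to concentrate. Throughout I work in the regime implicit in the statement, where $k = o(n)$ and $kp = \Omega(\ln n)$ (as in the neighboring lemmas), so that binomial tails are well-behaved. Since no plasticity is in force here, all weights are one, and the total input to a neuron $j \in A$ from $I_i$ is $\mathrm{in}_i(j) = \sum_{u \in I_i} \1[u \to j]$, distributed as $\mathrm{Bin}(2k, p)$. Crucially, decomposing each $I_i$ into the common part $C = I_1 \cap I_2$, the left-only part $U = I_1 \setminus I_2$, and the right-only part $V = I_2 \setminus I_1$, the three edge-counts $X_j, Y_j, Z_j$ from $C, U, V$ into $j$ are mutually independent, with $\mathrm{in}_1(j) = X_j + Y_j$ and $\mathrm{in}_2(j) = X_j + Z_j$. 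By hypothesis $|C| \le (1+o(1))k$, while $|U|, |V| \ge (1-o(1))k$.

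First I would pin down a deterministic threshold to decouple the caps from the global ranking. Choose $\theta$ so that $n \cdot \Pr[\mathrm{Bin}(2k,p) \ge \theta]$ is at least $2k$. Then $\E\big[\#\{j : \mathrm{in}_i(j) \ge \theta\}\big] \ge 2k$, and a Chernoff bound shows that w.h.p. at least $k$ neurons clear $\theta$; hence the true $k$-cap threshold (the $k$-th largest input) is at least $\theta$, so $A_i \subseteq \{j : \mathrm{in}_i(j) \ge \theta\}$ w.h.p. for $i = 1,2$. Working with the fixed threshold $\theta$ rather than the random cap threshold is what makes the membership indicators independent across neurons (each depends only on edges incident to $j$), which I will need for concentration.

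The heart of the argument is bounding $\Pr[j \in A_1 \cap A_2]$. On the event that $j$ lies in both caps we have $\mathrm{in}_1(j), \mathrm{in}_2(j) \ge \theta$, and in particular $\mathrm{in}_1(j) + \mathrm{in}_2(j) = 2X_j + Y_j + Z_j \ge 2\theta$. The right-hand side is a sum of independent terms (each common edge contributes $2$, each exclusive edge contributes $1$), with mean $\approx 4kp$, while $2\theta \approx 4kp + 4\sqrt{kp \ln(n/k)}$; a Chernoff/Bernstein bound on its upper tail then gives $\Pr[\text{sum} \ge 2\theta] \le (k/n)^{1+\Omega(1)}$ (the clean moderate-deviation estimate yields exponent $4/3$). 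Passing to the sum is precisely what rules out the only way the overlap could be large, namely the shared input $X_j$ simultaneously lifting both caps: reaching $2\theta$ through $2X_j$ alone would force $X_j$ to deviate from its mean $\approx kp$ by order $kp$, a large deviation that is negligible. Consequently $\E[|A_1 \cap A_2|] \le n \cdot (k/n)^{1+\Omega(1)} = o(k)$.

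Finally I would concentrate. Since $A_1 \cap A_2 \subseteq \{j : \mathrm{in}_1(j) \ge \theta,\ \mathrm{in}_2(j) \ge \theta\}$ w.h.p., and with $\theta$ fixed the indicators $\1[\mathrm{in}_1(j) \ge \theta,\ \mathrm{in}_2(j) \ge \theta]$ are independent over $j \in A$ with total expectation $o(k)$, a Chernoff upper-tail bound shows this count exceeds $k/2$ only with probability $o(n^{-1})$ (using $k = \omega(\ln n)$). Combining with the w.h.p. containment $A_i \subseteq \{j : \mathrm{in}_i(j) \ge \theta\}$ yields $|A_1 \cap A_2| \le k/2$ with probability $1 - o(n^{-1})$. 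I expect the main obstacle to be the joint-tail estimate of the third step: one must verify that, in the available parameter range, the upper tail of the weighted sum $2X_j + Y_j + Z_j$ really decays faster than $k/n$ (exponent strictly above $1$), which is where the assumption $|I_1 \cap I_2| \le (1+o(1))k$ — guaranteeing the shared input is at most half — is used in an essential way. Everything else is routine Chernoff bookkeeping, and the weak target $k/2$ comfortably absorbs any looseness.
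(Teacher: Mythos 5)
The paper never actually proves Lemma \ref{lemma:highoverlap}: it is stated in Section~\ref{sec: proofs} as one of the ingredient lemmas and then invoked in the proof of Theorem \ref{thm:fsm}, but no proof of it appears anywhere in the text, so there is no official argument to compare yours against; I evaluate your proposal on its own merits. Your outline is sound, and its three structural moves are all correct: (i) replacing the random cap threshold by a fixed quantile $\theta$ of $\mathrm{Bin}(2k,p)$, so that $A_i \subseteq \{j : \mathrm{in}_i(j) \ge \theta\}$ w.h.p.\ and the membership indicators become independent across neurons; (ii) decomposing into shared and exclusive inputs and passing to the sum $\mathrm{in}_1(j)+\mathrm{in}_2(j) = 2X_j+Y_j+Z_j$, whose variance is at most $(6+o(1))\,kp(1-p)$ precisely because the shared part is at most half of each input --- this is what yields the joint-tail exponent $4/3 > 1$, and the exponent degrades to exactly $1$ when $I_1 = I_2$, so the hypothesis $|I_1 \cap I_2| \le (1+o(1))k$ is used exactly where you say it is; (iii) a Chernoff bound on the independent indicators to pass from expected overlap $o(k)$ to the bound $k/2$ with failure probability $e^{-\Omega(k)} = o(n^{-1})$, given $k = \omega(\ln n)$.

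Two technical points deserve more than ``routine Chernoff bookkeeping,'' and since the lemma as stated carries no hypotheses on $n,k,p$ at all, you should make the needed assumptions explicit. First, your third step requires a \emph{lower} bound on $\theta$, namely $\theta \ge 2kp + (2-o(1))\sqrt{kp(1-p)\ln \frac{n}{2k}}$; because $\theta$ is defined by the condition $n\Pr[\mathrm{Bin}(2k,p)\ge\theta]\ge 2k$, this is a binomial upper-tail \emph{lower} bound (anti-concentration, e.g.\ Slud's inequality or a moderate-deviations estimate), not a Chernoff bound, and it is only valid when $\ln(n/k)$ is sufficiently small relative to $kp(1-p)$. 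Second, the Bernstein correction in the upper tail of $2X_j+Y_j+Z_j$ matters at the margins: with $\ell = \ln\frac{n}{2k}$ and $v = kp(1-p)$, the usable exponent is $\frac{8\ell}{6 + (8/3)\sqrt{\ell/v}}$, which exceeds $\ell$ (i.e.\ beats the benchmark $k/n$) only when $\ell < \frac{9}{16}v$. So if $kp$ is only barely $3\ln n$ and $p$ is not small, the crude bound can slip below exponent $1$; you need $kp(1-p) \ge C\ln(n/k)$ for a sufficiently large constant $C$ (comfortably satisfied in the paper's simulation regimes, e.g.\ $k=\sqrt n$). Neither issue breaks the argument --- and you correctly flagged the second yourself --- but pinning them down is where the actual work lies; doing so would make your write-up strictly more complete than what the paper provides for this lemma.
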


Finally, Lemma \ref{lemma:expsoln} is a fact about the fixed points of a certain class of exponential functions.

\begin{lemma} \label{lemma:expsoln}
Let $f(x) = \exp(-a(b(1-x)^2-1))$ for $a > 0, b \ge 1 + 2a^{-1}$. Then there exists some $x^* \in \R$ such that $f(x^*) = x^*$ and moreover
\[x^* \le \exp(-a(b-1))\]
\end{lemma}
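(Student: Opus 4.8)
The plan is to obtain both the existence of the fixed point and the bound $x^* \le \exp(-a(b-1))$ from a single application of the intermediate value theorem (equivalently, the one‑dimensional Brouwer theorem), using $c := \exp(-a(b-1))$ as the candidate upper bound. The organizing observation is that $c$ is not an arbitrary quantity but is exactly the value of $f$ at the left endpoint of the interval $[0,c]$: indeed $f(0) = \exp(-a(b\cdot 1 - 1)) = \exp(-a(b-1)) = c$. So the natural strategy is to show that the interval $[0,c]$ is invariant under $f$, or equivalently that $g(x) := f(x) - x$ changes sign on it; since any fixed point in $[0,c]$ automatically satisfies the required inequality, establishing the sign change finishes both halves of the lemma at once.

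First I would record the elementary analytic facts about $f$. It is smooth and strictly positive on all of $\R$, so every fixed point is positive; and its monotonicity on the relevant range is governed by the sign of $\tfrac{d}{dx}\,b(1-x)^2 = -2b(1-x)$. Together with the exact value $f(0)=c$, this pins down the image $f([0,c])$ as an interval having $c$ as one endpoint, which is what makes the invariance of $[0,c]$ checkable by evaluating $f$ at the single point $x = c$. The hypothesis $a>0$ guarantees continuity and positivity, while $b \ge 1 + 2a^{-1}$, i.e. $a(b-1)\ge 2$, forces $c \le e^{-2} < 1$, so that $[0,c]$ is a genuine sub-interval of $[0,1]$ and the bound is meaningful.

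Second, the core step: since $g$ is continuous and $g(0) = f(0) = c > 0$, it suffices to exhibit a point $x_1 \in (0,c]$ with $g(x_1) \le 0$; the intermediate value theorem then produces a fixed point $x^* \in [0,x_1] \subseteq [0,c]$, giving $f(x^*)=x^*$ and $x^* \le \exp(-a(b-1))$ simultaneously. Concretely this amounts to showing $f(c) \le c$, which after taking logarithms reduces to the scalar comparison $-a\bigl(b(1-c)^2 - 1\bigr) \le \ln c = -a(b-1)$. This is precisely the place where the full strength of $b \ge 1 + 2a^{-1}$ is consumed, controlling how far the exponent can move as $x$ ranges over $[0,c]$ and thereby certifying that $f$ cannot escape above $c$.

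I expect this endpoint estimate — verifying the sign of $g$ at the right end of $[0,c]$, and hence the self-mapping property of $f$ — to be the main obstacle; everything else is continuity and bookkeeping. In carrying it out I would be careful about two things: that the single governing inequality is used in the correct direction (the monotonicity of $f$ determines whether $f([0,c])$ lies below or above $c$, and hence where the sign change of $g$ occurs), and that the constant $2$ in the hypothesis $b \ge 1 + 2a^{-1}$ is exactly what the estimate at $x=c$ demands rather than being merely sufficient. Once the sign change is secured, no iteration or uniqueness argument is needed: continuity alone delivers a fixed point inside $[0,c]$, which is the entire conclusion.
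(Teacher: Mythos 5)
Your crux step fails, and it cannot be repaired. With $c := \exp(-a(b-1))$, the inequality $f(c) \le c$ is equivalent, after taking logarithms and dividing by $-a < 0$ (which flips the inequality), to $b(1-c)^2 \ge b$, i.e. to $(1-c)^2 \ge 1$ --- which is false for every $c \in (0,1)$. The hypothesis $b \ge 1 + 2a^{-1}$ only guarantees $0 < c \le e^{-2}$ and is powerless here. Worse, the monotonicity fact you record works against you, not for you: on $[0,1)$ the exponent $-a(b(1-x)^2-1)$ has derivative $2ab(1-x) > 0$, so $f$ is strictly increasing there, hence $f(c) > f(0) = c$ and $f([0,c]) = [c, f(c)]$ lies entirely at or above $c$. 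Consequently $f(x) \ge f(0) = c \ge x$ for every $x \in [0,c]$, with strict inequality overall at each point, so $g(x) = f(x) - x$ is strictly positive on all of $[0,c]$: there is no sign change, the interval is not invariant, and no alternative choice of $x_1 \in (0,c]$ can exhibit $g(x_1) \le 0$.

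The deeper problem is that the statement as written is false, and your own framework shows it: since $f > 0$ everywhere, no fixed point lies in $(-\infty,0]$, and by the above none lies in $[0,c]$, so under the stated hypotheses there is no $x^*$ with $f(x^*) = x^*$ and $x^* \le \exp(-a(b-1))$ at all. For a concrete counterexample take $a = 2$, $b = 2$ (allowed, since $b = 1 + 2a^{-1}$): one checks that $\ln f(x) - \ln x = 2 - 4(1-x)^2 - \ln x$ has its minimum on $(0,1]$ at $x = (2-\sqrt{2})/4 \approx 0.146$, where it equals $\approx 1.0 > 0$, so $f(x) > x$ on all of $(0,1]$ and the unique fixed point lies in $(1,2)$, far above $e^{-2}$. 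What your IVT scheme does prove is a repaired, weaker statement: if one additionally assumes $4ab\exp(-a(b-1)) \le \ln 2$, then $f(2c) = c\exp\bigl(4abc(1-c)\bigr) \le 2c$, so $g$ changes sign on $[0,2c]$ and the smallest fixed point satisfies $x^* \le 2\exp(-a(b-1))$; this is the version that the paper's application in Theorem \ref{thm:simple} (where $a = \ln(n/k)$ is large, making $abc$ tiny) actually needs. Finally, note that the paper's text contains no proof of this lemma to compare against, so your proposal must stand on its own --- and its central estimate is false.
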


We can now proceed with the proofs of the theorems.

\begin{proof}[Proof of Theorem \ref{thm:simple}]
We will first show that on the first presentation, the overlap between the sets of neurons which fire for different elements of the sequence will not be too large for sufficiently small plasticity. Using this fact, we can then show that the sequence of neurons which fire will remain the same over repeated presentations of the stimulus sequence. Finally, this second claim makes it simple to show that after enough presentations the sequence of neurons will be strongly linked enough that when only the first cap fires, the rest of the sequence will as well without external stimulus.

We will prove the first claim by induction on $\sigma$. The base case $\sigma = 1$ holds vacuously. For $\sigma \ge 1$, we note that if $|A_{\rho}(1) \cap A_{\sigma}(1)| \le 6 \ln \frac{n}{k}$ for all $\rho < \sigma$, then the input to $A_{\sigma+1}(1)$ ($S_{\sigma+1}$ and $A_\sigma(1)$) overlaps the input to any other set $A_{\rho}(1)$ by at most $\Delta + 6 \ln \frac{n}{k}$. As weights are increased by $1 + \beta$ and
\[\beta \le \frac{\ln \frac{n}{kL} - \frac{1}{3}}{2 (\max \{\Delta p, 6 \ln n\})^2 } \le \frac{\sqrt{6kp \ln \frac{n}{k}} - \max\{\sqrt{6 \Delta p \ln Ln}, 6\ln nL\} - \sqrt{6kp \ln k}}{2\max \{\Delta p, 6 \ln \frac{n}{k}\} }\]
by Lemma \ref{lemma:overlap} we have $|A_{\sigma+1}(1) \cap A_{\rho}(1)| \le 6\ln \frac{n}{k}$ w.p. $1 - o(L^{-2})$ as well. A union bound over all $O(L^2) $ events shows that the probability that $|A_{\sigma}(1) \cap A_{\rho}(1)| > 6\ln \frac{n}{k}$ for any pair $\rho \neq \sigma$ is $o(1)$.

We will now show that $A_\sigma(t) = A_\sigma(1)$ for all $\sigma$, by first proving that $A_\sigma(2) = A_\sigma(1)$ via induction on $\sigma$. First note that, by Lemma \ref{lemma:neuronlottery}, no neuron in $A$ makes more than $6 \ln n$ caps $A_1(1), \ldots, A_L(1)$ w.h.p. Let $X_i(t, \sigma)$ denote the input to $i$ on round $t$ during presentation of $S_\sigma$. For the base case $\sigma = 1$, for $i \in A_{1}(1)$, we have
\[e(S_1, i) \ge kp + \sqrt{3kp \ln \frac{n}{k}}\]
and so its input is
\[X_i(2, 1) \ge \frac{(1 + \beta)(kp + \sqrt{3kp \ln \frac{n}{k}})}{2np + 2\beta kp}\]
while for $j \not\in A_1(1)$,
\begin{align*}
    X_j(2, 1) &\le \frac{kp + \sqrt{3kp\ln \frac{n}{k}} + (1 + \beta)^{6\ln n}(\Delta p + \max \{\Delta p, 3 \ln n\})}{2np + (1 + \beta)^{6\ln n}(\Delta p + \max \{\Delta p, 3 \ln n\})}\\
    &\le \frac{kp + \sqrt{3kp\ln \frac{n}{k}} + 2(1 + \beta)^{6\ln n}\max \{\Delta p, 3 \ln n\}}{2np + (1 + \beta)^{6\ln n}(\Delta p + \max \{\Delta p, 3 \ln n\})}
\end{align*}
Observe that for
\[\beta \le \le \frac{ \ln \frac{n}{kL} - \frac{1}{3}}{2 (\max \{\Delta p, 6 \ln n\})^2 } \le \frac{1}{6\ln n}\]
we have
\[X_j(2, 1) \le \frac{(1 + \beta)(kp + \sqrt{3kp \ln \frac{n}{k}})}{2np + 2\beta kp}\]
in which case $X_i(2, 1) > X_j(2, 1)$ for all $i \in A_1(1), j \not\in A_1(1)$.

Now assume that $\sigma > 1$ and $A_{\rho}(2) = A_{\rho}(1)$ for all $\rho < \sigma$. For $i \in A_\sigma(1)$, 
\[e(S_\sigma \cup A_{\sigma-1}(1), i) \ge 2kp + \sqrt{6kp \ln \frac{n}{k}}\]
and so
\[X_i(2, \sigma) \ge \frac{(1 + \beta) (2kp + \sqrt{6kp \ln \frac{n}{k}})}{np + \beta 2kp + \sqrt{6kp \ln \frac{n}{k}}}\]
 On the other hand, for all $j \not\in A_\sigma(1)$, we have
\[X_j(2, \sigma) \le \frac{2kp + \sqrt{6kp \ln \frac{n}{k}} + 4(1 + \beta)^{6 \ln n}\max \{\Delta p, 3 \ln n\}}{np + 4(1 + \beta)^{6 \ln n}\max \{\Delta p, 3 \ln n\}}\]
For 
\[\beta \le \frac{ \ln \frac{n}{kL} - \frac{1}{3}}{2 (\max \{\Delta p, 6 \ln n\})^2 } \le \frac{1}{12 \ln n} \] 
we have
\[X_j(2, \sigma) \le \frac{(1 + \beta) (2kp + \sqrt{6kp \ln \frac{n}{k}})}{np + \beta 2kp + \sqrt{6kp \ln \frac{n}{k}}}\]
and so $X_i(2, \sigma) > X_j(2, \sigma)$ for all $i \in A_\sigma(1), j \not\in A_\sigma(1)$, which means $A_\sigma(2) = A_\sigma(1)$.

Now, we use double induction on $t$ and $\sigma$, with $t = 2$ as the base case. Let $X_i(\sigma, t)$ denote the input to neuron $i$ when $S_\sigma$ fires on round $t$. Suppose that $A_{\rho}(s) = A_{\rho}(1)$ for all $\rho$ and $s < t$ and $A_{\rho}(t) = A_{\rho}(1)$ for all $\rho < \sigma$. For $i \in A_\sigma(1)$, we have
\[X_i(t, \sigma) \ge \frac{(1 + \beta) X_i(t-1, \sigma)}{2np + \beta X_i(t-1, \sigma)}\]
For $j \not\in A_\sigma(1)$,
\[X_j(t, \sigma) \le \frac{X_j(t-1, \sigma) + (e^{6\beta \ln n} - 1)(X_j(t-1, \sigma) - 2kp - \sqrt{6 kp \ln \frac{n}{k}})}{2np + (e^{6\beta \ln n} - 1) (X_j(t-1, \sigma) - 2kp - \sqrt{6 kp \ln \frac{n}{k}})}\]
For 
\[\beta \le \frac{ \ln \frac{n}{kL} - \frac{1}{3}}{2 (\max \{\Delta p, 6 \ln n\})^2 } \le \frac{(1 - \sqrt{\frac{\Delta \ln n}{k\ln \frac{n}{k}}})^2 \ln \frac{n}{k} - \ln L - \frac{1}{3}}{2 (\max \{\Delta p, 6 \ln n\})^2 }\]
and $\Delta \le k / \ln^2 n$, we have $X_j(t, \sigma) \le X_i(t, \sigma)$ and hence, $A_{\sigma}(t) = A_\sigma(1)$.

Now, we will prove the last claim. For $\sigma = 1$, the argument is the same as during training since $S_1$ still fires, so we have $\widehat A_\sigma = A_\sigma(1)$. Suppose that $|\widehat A_{\sigma - 1} \cap A_{\sigma-1}(1)| = (1-\epsilon) k$. Then for $i \in A_\sigma(1)$, 
\[X_i(\sigma) \ge \frac{(1 - \epsilon) (1 + \beta)^T (kp + \sqrt{\frac{3}{2}kp \ln \frac{n}{k}}) + \epsilon kp}{((1 + \beta)^T - 1)(kp + \sqrt{3kp \ln \frac{n}{k}}) + 2np} = \tau_\epsilon\]
while for $j \not\in A_\sigma(1)$,
\[X_j(\sigma) \sim \frac{\mathcal B(k, p)}{2np}\]
Hence,
\begin{align*}
    \Pr[j \in \widehat A_\sigma &\cond j \not\in A_\sigma(1)] \le \Pr\left[X_j(\sigma) \ge \tau_\epsilon \right]\\
    &\le \exp\left(-\frac{1}{3kp}\left((1 - \epsilon)\beta T(kp + \sqrt{\frac{3}{2}kp \ln \frac{n}{k}}) + (1 - \epsilon) \sqrt{\frac{3}{2}kp \ln \frac{n}{k}}\right)^2\right)\\
    &\le \exp\left(- \frac{1}{2} (1 - 2\epsilon) (1 + 2\beta T)^2 \ln \frac{n}{k}\right)
\end{align*} The fraction of newcomers is hence, in expectation, no more than
\[\frac{n}{k}\exp\left(- \frac{1}{2}(1 - 2\epsilon) (1 + 2\beta T)^2 \ln \frac{n}{k}\right)\]
We seek a fixed point $\epsilon^*$ of this function. By Lemma \ref{lemma:expsoln}, we have
\[\epsilon^* \le \left(\frac{k}{n}\right)^{2\beta T}\]
as long as
\[T \ge \frac{1}{\beta \ln \frac{n}{k}}\]
Now, for perfect recall, suppose that
\[T  \ge \frac{1}{\beta}\sqrt{\frac{\ln nL}{2\ln \frac{n}{k}}}\]
Then
\[\exp\left(- \frac{1}{2} (1 + 2\beta T)^2 \ln \frac{n}{k}\right) \le \frac{1}{nL}\]
Then a union bound over vertices and $1 \le \sigma \le L$ shows that the probability $\widehat A_\sigma \neq A_\sigma(1)$ for some $\sigma$ is $o(1)$.  
\end{proof}

\begin{proof}[Proof of Theorem \ref{thm:scaffold}]
The argument proceeds similarly as in the proof of Theorem 1: We bound the overlap of different caps during the first presentation, show that the sequence of caps will remain the same over subsequent presentations, and show that after enough presentations the sequence of caps will be recalled without external input after the first round. Crucially, a larger fraction of the input to the correct neurons in the main area will still fire in the absence of external stimuli, which allows memorization to occur more quickly than in the absence of the auxiliary area.

We will prove the first claim by induction on $\sigma$. The base case $\sigma = 1$ holds vacuously. For $\sigma \ge 1$, we note that if $|A_{\rho}(1) \cap A_{\sigma}(1)| \le 6\ln \frac{n}{k}$ for all $\rho < \sigma$, and $|B_{\rho}(1) \cap B_{\sigma}(1)| \le 6\frac{n}{k}$ for all $\rho < \sigma - 1$, then the input to $A_{\sigma+1}(1)$ ($S_{\sigma+1}$, $A_\sigma(1)$, and $B_{\sigma-1}(1)$) overlaps the input to any other set $A_{\rho}(1)$ by at most $\Delta + 12 \ln \frac{n}{k}$. As weights are increased by $1 + \beta$ and
\[\beta \le \frac{\ln \frac{n}{kL} - \frac{1}{3}}{2 (\max \{\Delta p, 6 \ln n\})^2 } \le \frac{\sqrt{9kp \ln \frac{n}{k}} - \max\{\sqrt{9 \Delta p \ln Ln}, 3\ln nL\} - \sqrt{9kp \ln k}}{3\max \{\Delta p, 6 \ln \frac{n}{k}\} }\]
by Lemma \ref{lemma:overlap} we have $|A_{\sigma+1}(1) \cap A_{\rho}(1)| \le 6\ln \frac{n}{k}$ w.p. $1 - o(L^{-2})$ as well. The argument proceeds similarly for $B_\sigma(1)$. Then a union bound over all $O(L^2)$ events shows that the probability that $|A_{\rho}(1) \cap A_{\sigma}(1)|$ or $|B_{\rho}(1) \cap B_{\rho}(1)| > 6\ln \frac{n}{k}$ for any pair $\sigma \neq \rho$ is $o(1)$.

We will now show that $A_\sigma(t) = A_\sigma(1)$ and $B_\sigma(t) = B_\sigma(1)$ for all $\sigma$. We note that, by Theorem 1, $B_\sigma(t) = B_\sigma(1)$ for all $\sigma$ and $t$ w.h.p. as long as $A_\rho(s) = A_\rho(1)$ for all pairs $(\rho, s)$ where either $s < t$ or $\rho < \sigma$. So, it will suffice to only show that $A_\sigma(s) = A_\sigma(1)$, which we will accomplish by first proving that $A_\sigma(2) = A_\sigma(1)$ via induction on $\sigma$. First note that, by Lemma \ref{lemma:neuronlottery}, no neuron in $A$ makes more than $6 \ln n$ caps $A_1(1), \ldots, A_L(1)$. Let $X_i(t, \sigma)$ denote the input to $i$ on round $t$ during presentation of $S_\sigma$. For the base case $\sigma = 1$, for $i \in A_{1}(1)$, we have
\[e(S_1, i) \ge kp + \sqrt{3kp \ln \frac{n}{k}}\]
and so its input is
\[X_i(2, 1) \ge \frac{(1 + \beta)(kp + \sqrt{3kp \ln \frac{n}{k}})}{2np + 2\beta kp}\]
while for $j \in A \setminus A_1(1)$,
\begin{align*}
    X_j(2, 1) &\le \frac{kp + \sqrt{3kp\ln \frac{n}{k}} + (1 + \beta)^{6\ln n}(\Delta p + \max \{\Delta p, 3 \ln n\})}{2np + (1 + \beta)^{6\ln n}(\Delta p + \max \{\Delta p, 3 \ln n\})}\\
    &\le \frac{kp + \sqrt{3kp\ln \frac{n}{k}} + 2(1 + \beta)^{6\ln n}\max \{\Delta p, 3 \ln n\}}{2np + (1 + \beta)^{6\ln n}(\Delta p + \max \{\Delta p, 3 \ln n\})}
\end{align*}
Observe that for
\[\beta \le \le \frac{ \ln \frac{n}{kL} - \frac{1}{3}}{2 (\max \{\Delta p, 6 \ln n\})^2 } \le \frac{1}{6\ln n}\]
we have
\[X_j(2, 1) \le \frac{(1 + \beta)(kp + \sqrt{3kp \ln \frac{n}{k}})}{2np + 2\beta kp}\]
in which case $X_i(2, 1) > X_j(2, 1)$ for all $i \in A_1(1), j \in A \setminus A_1(1)$. Hence, $A_2(1) = A_1(1)$.

Now let $\sigma > 1$, and assume that $A_{\rho}(2) = A_{\rho}(1)$ and $B_{\rho - 1}(2) = B_{\rho - 1}(1)$ for all $\rho < \sigma$. For $i \in B_{\sigma-1}(1)$, 
\[e(A_{\sigma-1} \cup B_{\sigma-2}(1), i) \ge 2kp + \sqrt{6kp \ln \frac{n}{k}}\]
and so
\[X_i(2, \sigma) \ge \frac{(1 + \beta) (2kp + \sqrt{6kp \ln \frac{n}{k}})}{2np + \beta 2kp + \sqrt{6kp \ln \frac{n}{k}}}\]
 On the other hand, for all $j \in B \setminus B_{\sigma-1}(1)$, we have
\[X_j(2, \sigma) \le \frac{2kp + \sqrt{6kp \ln \frac{n}{k}} + 12(1 + \beta)^{6 \ln n} \ln \frac{n}{k}}{2np + 12(1 + \beta)^{6 \ln n}\ln n}\]
For 
\[\beta \le \frac{ \ln \frac{n}{kL} - \frac{1}{3}}{2 (\max \{\Delta p, 6 \ln n\})^2 } \le \frac{1}{12 \ln n} \], we have
\[X_j(2, \sigma-1) \le \frac{(1 + \beta) (2kp + \sqrt{6kp \ln \frac{n}{k}})}{2np + \beta 2kp + \sqrt{6kp \ln \frac{n}{k}}}\]
and so $X_i(2, \sigma) > X_j(2, \sigma)$ for all $i \in B_{\sigma-1}(1), j \in B \setminus B_{\sigma-1}(1)$, which means $B_{\sigma-1}(2) = B_{\sigma-1}(1)$
Then, for $i \in A_{\sigma}(1)$, 
\[e(S_{\sigma} \cup A_{\sigma}(1) \cup B_{\sigma-1}(1), i ) \ge 3kp + \sqrt{9kp \ln \frac{n}{k}}\] 
and so
\[X_i(2, \sigma) \ge \frac{(1 + \beta) (3kp + \sqrt{9kp \ln \frac{n}{k}})}{3np + \beta 3kp + \sqrt{9kp \ln \frac{n}{k}}}\]
 On the other hand, for all $j \in A \setminus A_{\sigma}(1)$, we have
\[X_j(2, \sigma) \le \frac{3kp + \sqrt{6kp \ln \frac{n}{k}} + 12(1 + \beta)^{6 \ln n} \ln \frac{n}{k}}{3np + 12(1 + \beta)^{6 \ln n}\ln n}\]
For 
\[\beta \le \frac{ \ln \frac{n}{kL} - \frac{1}{3}}{2 (\max \{\Delta p, 6 \ln n\})^2 } \le \frac{1}{12 \ln n} \]
we have
\[X_j(2, \sigma) \le \frac{(1 + \beta) (3kp + \sqrt{9kp \ln \frac{n}{k}})}{3np + \beta 3kp + \sqrt{9kp \ln \frac{n}{k}}}\]
and so $X_i(2, \sigma) > X_j(2, \sigma)$ for all $i \in A_{\sigma}(1), j \in A \setminus A_{\sigma}(1)$, so $A_{\sigma}(2) = A_{\sigma}(1)$. Then using induction, we have $A_\sigma(2) = A_\sigma(1)$ and $B_\sigma(2) = B_\sigma(1)$ for all $1 \le \sigma \le L$.

Now, we use double induction on $t$ and $\sigma$, with $t = 2$ as the base case. Let $X_i(t, \sigma)$ denote the input to neuron $i$ when $S_\sigma$ fires on round $t$. Suppose that $A_{\rho}(s) = A_{\rho}(1)$ and $B_{\rho-1}(s) = B_{\rho-1}(1)$ for all $\rho$ and $s < t$ and $A_{\rho}(t) = A_{\rho}(1)$ and $B_{\rho - 1}(t) = B_{\rho-1}(1)$ for all $\rho < \sigma$. For $i \in B_{\sigma-1}(1)$, we have
\[X_i(t, \sigma) \ge \frac{(1 + \beta) X_i(t-1, \sigma)}{2np + \beta X_i(t-1, \sigma)}\]
For $j \in B \setminus B_{\sigma-1}(1)$,
\[X_j(t, \sigma) \le \frac{X_j(t-1, \sigma) + (e^{6\beta \ln n} - 1)(X_j(t) - 2kp - \sqrt{6 kp \ln \frac{n}{k}})}{2np + (e^{6\beta \ln n} - 1) (X_j(t-1, \sigma) - 2kp - \sqrt{6 kp \ln \frac{n}{k}})}\]
For 
\[\beta \le \frac{ \ln \frac{n}{kL} - \frac{1}{3}}{2 (\max \{\Delta p, 6 \ln n\})^2 } \le \frac{(1 - \sqrt{\frac{\Delta \ln n}{k\ln \frac{n}{k}}})^2 \ln \frac{n}{k} - \ln L - \frac{1}{3}}{2 (\max \{\Delta p, 6 \ln n\})^2 }\]
and $\Delta \le k / \ln^2 n$, we have $X_j(t, \sigma) \le X_i(t, \sigma)$ and hence, $B_{\sigma-1}(t) = B_{\sigma-1}(1)$.

Now, we will prove the last claim. For $\sigma = 1$, the argument is the same as during training since $S_1$ still fires, so we have $\widehat A_\sigma = A_\sigma(1)$. Suppose that $|\widehat A_{\sigma - 1} \cap A_{\sigma-1}(1)| = (1-\epsilon) k$. Then for $i \in A_\sigma(1)$, 
\[X_i(\sigma) \ge \frac{(1 - \epsilon) (1 + \beta)^T (kp + \sqrt{\frac{3}{2}kp \ln \frac{n}{k}}) + \epsilon kp}{((1 + \beta)^T - 1)(kp + \sqrt{3kp \ln \frac{n}{k}}) + 2np} = \tau_\epsilon\]
while for $j \not\in A_\sigma(1)$,
\[X_j(\sigma) \sim \frac{\mathcal B(k, p)}{2np}\]
Hence,
\begin{align*}
    \Pr[j \in \widehat A_\sigma &\cond j \not\in A_\sigma(1)] \le \Pr\left[X_j(\sigma) \ge \tau_\epsilon \right]\\
    &\le \exp\left(-\frac{1}{3kp}\left((1 - \epsilon)\beta T(kp + \sqrt{\frac{3}{2}kp \ln \frac{n}{k}}) + (1 - \epsilon) \sqrt{\frac{3}{2}kp \ln \frac{n}{k}}\right)^2\right)\\
    &\le \exp\left(- \frac{1}{2} (1 - 2\epsilon) (1 + 2\beta T)^2 \ln \frac{n}{k}\right)
\end{align*} The fraction of newcomers is hence, in expectation, no more than
\[\frac{n}{k}\exp\left(- \frac{1}{2}(1 - 2\epsilon) (1 + 2\beta T)^2 \ln \frac{n}{k}\right)\]
We seek a fixed point $\epsilon^*$ of this function. By Lemma \ref{lemma:expsoln}, we have
\[\epsilon^* \le \left(\frac{k}{n}\right)^{2\beta T}\]
as long as
\[T \ge \frac{1}{\beta \ln \frac{n}{k}}\]
Now, for perfect recall, suppose that
\[T  \ge \frac{1}{\beta}\sqrt{\frac{\ln nL}{2\ln \frac{n}{k}}}\]
Then
\[\exp\left(- \frac{1}{2} (1 + 2\beta T)^2 \ln \frac{n}{k}\right) \le \frac{1}{nL}\]
Then a union bound over vertices and $1 \le \sigma \le L$ shows that the probability $\widehat A_\sigma \neq A_\sigma(1)$ for some $\sigma$ is $o(1)$.
\end{proof}

\begin{proof}[Proof of Theorem \ref{thm:fsm}]
We first bound the overlap of the arc assemblies associated with different transitions. We then show that after sufficiently many presentations of each transitions, the connections between pairs of state and symbol assemblies and their associated arc assemblies, and between arc assemblies and the correct next state assemblies, will be sufficiently strengthened so that firing any state/symbol pair will cause the appropriate arc assembly, and then in turn the appropriate next state assembly, to fire in their entirety. Once this property is established for every transition, it follows immediately that the FSM can be simulated on an arbitrary input string.

Let $A_{q, \sigma} \subseteq A$ denote the set of neurons which fires in response to $S_q \cup I_\sigma$. We need to show that after $T$ training presentations, if $S_q$ and $I_\sigma$ fire, then $S_{\delta(q, \sigma)}$ will fire two time steps later w.p. $1 - o(|Q|^{-1}|\Sigma|^{-1})$. 

We will first bound the overlap of the sets $A_{q, \sigma}$. Suppose that $(q, \sigma) \neq (q', \sigma')$. Then
\[|(S_q \cup I_\sigma) \cap (S_{q'} \cup I_{\sigma'})| \le k + \Delta\]
With $\Delta = o(k)$, by Lemma \ref{lemma:highoverlap} we have $|A_{q, \sigma} \cap A_{q', \sigma'}| \le k/2$ w.p. at least $1 - o(n^2)$. Then a union bound over all $O(|Q||\Sigma|) = O(n)$ pairs $(q, \sigma) \neq (q', \sigma')$ shows that the bound holds for every pair w.p. $1 - o(1)$. 

Now, after training, it is clear that $A_{q, \sigma}$ will fire in response to $S_q$ and $I_\sigma$. It remains to ensure that all of $S_{\delta(q, \sigma)}$ will fire. With probability $1 - o(1)$, we have
\[e(A_{q, \sigma}, i) \ge kp - \sqrt{2kp \ln n}\]
for all $i \in S_{\delta(q, \sigma)}$ and $q \in Q, \sigma \in \Sigma$. On the other hand, for any $j \in S$,
\[e(A_{q, \sigma} \cap A_{q', \sigma'}, j) \le \frac{kp}{2} + \sqrt{\frac{3}{2}kp \ln n}\]
Let
\[X_j = e(A_{q, \sigma} \setminus A_{q', \sigma'}, j)\]
and note that $X_j$ is the sum of i.i.d. Bernoulli random variables with $\E X_j \le kp / 2$. Let $S_{q, \sigma}'$ denote the cap which fires in response to $A_{q, \sigma}$. Then
\begin{align*}
    \Pr(j \in S_{q, \sigma}' \cond j \not\in S_{\delta(q, \sigma)}) &\le 
    \Pr\left(X_j \ge (1 + \beta)^T\left(\frac{kp}{2} - \sqrt{\frac{7}{2}kp \ln n}\right)\right)\\
    &\le \exp\left(-\frac{2(T\beta \frac{kp}{2} - (1 + T \beta) \sqrt{\frac{7}{2}kp \ln n})^2}{3kp}\right)
\end{align*} For 
\[T \ge \frac{12}{\beta} \sqrt{\frac{\ln n}{kp}}\]
we have
\[\Pr(j \in S_{q, \sigma}' \cond j \not\in S_{\delta(q, \sigma)}) \le \frac{1}{n^2}\]
and hence via a union bound we will have $S_{q, \sigma}' = S_{\delta(q, \sigma)}$ for every $q \in Q, \sigma \in \Sigma$ w.p. $1 - o(1)$.
\end{proof}

\begin{proof}[Proof of Lemma \ref{lemma:stack}]
Suppose the content of the tape after $L$ operations is $\sigma_1, \ldots, \sigma_{\tilde L}$. We will show that with probability $1 - o(1)$, a set $A_1, \ldots, A_{\tilde L}$ of assemblies will be created, where $A_1 \subseteq H_i$ and $A_{t} \subseteq H_{i + t \pmod 3}$. This set has the properties that for any $1 \le t \le \tilde L$, (i) if $A_t$ fires then $S_{\sigma_t}$ will fire on the next round, and (ii) if $A_t$ fires and $H_{i+t+1 \pmod 3}$ is disinhibited then $A_{t+1}$ will fire on the next round. 

We proceed by induction on $L$, the number of operations. Immediately, if the $(L+1)$th operation is ``delete'', then after $T$ rounds $H_{i+1 \pmod 3}$ will be disinhibited, and by the inductive hypothesis then $A_{2}, \ldots, A_{\tilde L}$ is a set of assemblies with the required property. So, we consider only ``add'' operations henceforth. We will show that each ``add'' operation creates an assembly $A'$ which will fire $A_1$ and the associated symbol assembly with probability $1 - o(L^{-1})$. As there are at most $L$ operations, a union bound over all operations will show the entire sequence succeeds w.p. $1 - o(1)$.

For the first ``add'' operation, there is no next symbol, so the set of neurons $A_1$ which fires randomly only needs to be linked to the correct symbol, $S_{\sigma_1}$. By Theorem 3 of \citet{papadimitriou2019random}, for $\beta \ge \sqrt{\ln n / kp}$ $A_1$ will stabilize after $Z_1$ fires at least
\[T' = \frac{1}{\beta} \frac{\ln k}{\sqrt{kp}}\]
times. Every neuron in $S_{\sigma_1}$ will have at least $kp - \sqrt{2kp \ln n}$ connections from $A_1$, while no neuron outside has more than $kp + \sqrt{3kp\ln n}$. Hence, it suffices that
\[(1 + \beta)^{T - T'} \left(kp - \sqrt{2kp \ln n}\right) \ge kp + \sqrt{3kp \ln n}\]
For
\[T \ge 5\frac{\sqrt{\ln n}}{\beta}\]
this bound holds. 

Now, suppose that the first $t$ operations were successfully simulated, for $t \ge 1$. Hence, on rounds $tT, tT+1, \ldots, tT + T - 1$, some area (say $H_i$) fired, while $C_{\text{delete}}$ will fire on round $tT + T- 1$. For a ``delete'' operation, this implies immediately that $H_{i+1}$ will begin to fire on round $(t+1)T$ and continue firing through round $(t+1)T + T - 1$. Moreover, to ensure that the correct set of neurons fires (say $A_{t+1} \subset H_{i+1}$) on round $(t+1)T$, we need that every neuron of $A_{t+1}$ receives more input than every neuron in $H_{i+1} \setminus A_{t+1}$. Once again, $T \ge k + 10/\beta$ will ensure this holds. Assuming $A_{t+1}$ fires, every neuron of $S_{\sigma_{t+1}}$ has its incoming weights from $A_{t+1}$ strengthened by $(1 + \beta)^{T - \ln k}$, so $S_{\sigma_{t+1}}$ will also fire under the same conditions on $T$.

On the other hand, suppose the $(t+1)$'th operation adds a symbol to the beginning of the tape. By induction, some area (say $H_i$) fired on rounds $tT, tT+1, \ldots, (t+1)T - 1$, while $C_{\text{add}}$ fires on round $(t+1)T - 1$. So, $H_{i-1}$ will begin firing on $(t+1)T$ (along with $H_i$, which will continue to fire) and continue firing until $(t+2)T - 1$. Simultaneously, on rounds $(t+1)T, (t+1)T + 1, \ldots, (t+1)T + T - 1$ the assemblies $Z_{t+1}$, $S_{\sigma_{t+1}}$ fire together. Let $A_{t+1}$ denote the set of neurons that fire in response to stimulus from $E_1 \cup E_2$. Then for
\[\beta T \le \frac{\sqrt{3k} - \sqrt{36 \ln n}}{12 \sqrt{\ln n p}}\]
by Lemma \ref{lemma:overlap} w.p. $o(L^{-1})$, we will have $|A_{t+1} \cap A_s| \le 6 \ln n$. By Lemma \ref{lemma:neuronlottery}, no neuron enters more than $6 \ln n$ assemblies.

It remains to show that the correct assembly $A_t \subseteq H_i$ will continue to fire for the next $T$ rounds. As shown above, this will be sufficient to ensure that when $A_{t+1}$ is firing and $H_i$ is disinhibited, that $A_t$ will begin firing. A neuron in $A_t$ has fired alongside all other neurons in $A_t$ at least $T$ times, so its input is at least
\[\left(3 + \beta T\right) \left(kp - \sqrt{2kp \ln n}\right)\]
On the other hand any neuron outside of $A_t$ receives at most
\[3kp + \sqrt{9kp \ln n} + 72 \beta T \ln^2 n\]
input from $A_{t+1}, A_t,$ and $Z_{t+1}$ combined w.p. $1 - o(L^{-1})$. We then only need that
\[\beta T \left(kp - \sqrt{2kp \ln n} - 72 \ln^2 n p\right) \ge \sqrt{9kp \ln n}\]
As for $kp \ge 6\ln n$ we have
\[5\frac{\sqrt{\ln n}}{\beta} \ge \frac{1}{\beta}\frac{\sqrt{9kp \ln n}}{kp - \sqrt{2kp \ln n} - 12 \ln^2 n p}\]
this is satisfied.

\end{proof}

\begin{proof}[Proof of Theorem \ref{thm:turing}]
    The simulation consists of three areas $H_i^L, H_i^R$ for each of the ``right'' and ``left'' halves of the tape, a control area $D$ for both halves of the tape jointly, and the three areas $I, S, A$ for the FSM. The outline of its operation is as follows: The FSM simulation proceeds as in Theorem 4, modified so that if $\delta(q, \sigma) = (p, \rho, d)$, the assembly $A_{q, \sigma}$ causes $S_p$, $I_\rho$, and $D_d$ to fire, where $d \in \{L, R\}$. The interneurons of the FSM simulation now allow their respective areas to fire after $T$ rounds, where $T$ is the number of repetitions required by the tape simulation. The tape halves are configured as in Lemma \ref{lemma:stack}, where the firing of $D_L$ signals a deletion for the left half and a addition of the top symbol from the left half for the right stack. The new assembly on the right half becomes linked to $I_\sigma$ by their concurrent firing, which effects a leftward movement of the tape head (and similarly for $D_R$).

    By Theorem 4, the FSM simulation in areas $I, S, A$ will simulate the transition function of the Turing machine w.h.p., while by Lemma \ref{lemma:stack} each of the stack simulations will perform as desired w.h.p. which simulates the TM's tape. We then have three events which each occur w.h.p., so their intersection does as well.
\end{proof}

\section{Discussion}
\name\ is a simple mathematical model of the brain involving random connectivity, Hebbian plasticity, local inhibition, and long-range interneurons, which provably captures key aspects of cognition involving sequences.  Here we demonstrated that sequences can be memorized and copied in various modes, and furthermore that, through sequences, our model can learn to recognize patterns modeled by finite state machines; a more sophisticated use of sequences of assemblies is capable of universal computation.  While this last point is of course primarily of theoretical interest, it is a rather useful point to make about the power of \name\ as a model of brain computation.   Brain-like computation can happen with no explicit control flow or commands --- that is to say, no {\em program.}  All that is needed is biologically plausible neural hardware and the presentation of stimuli.

A striking difference in the behavior of the brain and silicon computers is in their mechanisms for the creation and recall of memories. This is the starting point of the assembly model, continued here. A very interesting and unexpected discovery is rigorous mathematical evidence for the emergence of the utility of mnemonics, memory palaces and other memory aids \citep{maguire2003routes}, a uniquely brain-centered phenomenon that has no analogy for computers.  

Although we have explored how the brain might encode and memorize sequences here, a crucial gap remains: How the brain generates sequences. Augmenting our FSM simulation with probabilistic transitions would allow it to generate random sequences, effectively capturing the brain's ability to sample from a {\em probabilistic} finite automaton; this approach could be further generalized to allow sampling from a graphical model. How probabilistic transitions might be realized in \name, and moreover learned from streams of stimuli, is an important and urgent future direction for this work, and would be the basis of emergent statistical computation in the brain.

Finally, we must note that, in our discussion of the way in which sequences are memorized and recalled in the brain, we have not yet mentioned {\em language,} arguably the most remarkable faculty related to sequences that human brains enjoy; see \citet{mitropolsky2021biologically, mitropolsky2022center} for work on assemblies and language.

\acks{MD and SV are supported in part by NSF awards CCF-1909756, CCF-2007443 and CCF-2134105, and a NSF Graduate Research Fellowship. CP is supported by NSF Awards CCF-1763970 and CCF-1910700, and a research contract with Softbank.}

\bibliography{references}

\begin{thebibliography}{38}
\providecommand{\natexlab}[1]{#1}
\providecommand{\url}[1]{\texttt{#1}}
\expandafter\ifx\csname urlstyle\endcsname\relax
  \providecommand{\doi}[1]{doi: #1}\else
  \providecommand{\doi}{doi: \begingroup \urlstyle{rm}\Url}\fi

\bibitem[Bellmund et~al.(2020)Bellmund, Polti, and Doeller]{bellmund2020sequence}
Jacob~LS Bellmund, Ignacio Polti, and Christian~F Doeller.
\newblock Sequence memory in the hippocampal--entorhinal region.
\newblock \emph{Journal of Cognitive Neuroscience}, 32\penalty0 (11):\penalty0 2056--2070, 2020.

\bibitem[Buzs{\'a}ki(2010)]{buzsaki2010neural}
Gy{\"o}rgy Buzs{\'a}ki.
\newblock Neural syntax: cell assemblies, synapsembles, and readers.
\newblock \emph{Neuron}, 68\penalty0 (3):\penalty0 362--385, 2010.

\bibitem[Buzs{\'a}ki(2019)]{buzsaki2019brain}
Gy{\"o}rgy Buzs{\'a}ki.
\newblock \emph{The Brain from Inside Out}.
\newblock Oxford University Press, 2019.

\bibitem[Cruikshank et~al.(2007)Cruikshank, Lewis, and Connors]{cruikshank2007synaptic}
Scott~J Cruikshank, Timothy~J Lewis, and Barry~W Connors.
\newblock Synaptic basis for intense thalamocortical activation of feedforward inhibitory cells in neocortex.
\newblock \emph{Nature neuroscience}, 10\penalty0 (4):\penalty0 462--468, 2007.

\bibitem[Cui et~al.(2016)Cui, Ahmad, and Hawkins]{cui2016continuous}
Yuwei Cui, Subutai Ahmad, and Jeff Hawkins.
\newblock Continuous online sequence learning with an unsupervised neural network model.
\newblock \emph{Neural computation}, 28\penalty0 (11):\penalty0 2474--2504, 2016.

\bibitem[Dabagia et~al.(2022)Dabagia, Vempala, and Papadimitriou]{dabagia2022assemblies}
Max Dabagia, Santosh~S Vempala, and Christos Papadimitriou.
\newblock Assemblies of neurons learn to classify well-separated distributions.
\newblock In \emph{Conference on Learning Theory}, pages 3685--3717. PMLR, 2022.

\bibitem[Dragoi and Buzs\'aki(2006)]{dragoi2006temporal}
George Dragoi and Gy{\"o}rgy Buzs\'aki.
\newblock Temporal encoding of place sequences by hippocampal cell assemblies.
\newblock \emph{Neuron}, 50\penalty0 (1):\penalty0 145--157, 2006.

\bibitem[Dragoi and Tonegawa(2011)]{dragoi2011preplay}
George Dragoi and Susumu Tonegawa.
\newblock Preplay of future place cell sequences by hippocampal cellular assemblies.
\newblock \emph{Nature}, 469\penalty0 (7330):\penalty0 397--401, 2011.

\bibitem[Eichenbaum(2018)]{eichenbaum2018barlow}
Howard Eichenbaum.
\newblock Barlow versus hebb: When is it time to abandon the notion of feature detectors and adopt the cell assembly as the unit of cognition?
\newblock \emph{Neuroscience letters}, 680:\penalty0 88--93, 2018.

\bibitem[Eliasmith et~al.(2012)Eliasmith, Stewart, Choo, Bekolay, DeWolf, Tang, and Rasmussen]{eliasmith2012large}
Chris Eliasmith, Terrence~C Stewart, Xuan Choo, Trevor Bekolay, Travis DeWolf, Yichuan Tang, and Daniel Rasmussen.
\newblock A large-scale model of the functioning brain.
\newblock \emph{science}, 338\penalty0 (6111):\penalty0 1202--1205, 2012.

\bibitem[Feldman and Valiant(2009)]{feldman2009experience}
Vitaly Feldman and Leslie~G Valiant.
\newblock Experience-induced neural circuits that achieve high capacity.
\newblock \emph{Neural computation}, 21\penalty0 (10):\penalty0 2715--2754, 2009.

\bibitem[Guerguiev et~al.(2017)Guerguiev, Lillicrap, and Richards]{guerguiev2017towards}
Jordan Guerguiev, Timothy~P Lillicrap, and Blake~A Richards.
\newblock Towards deep learning with segregated dendrites.
\newblock \emph{Elife}, 6:\penalty0 e22901, 2017.

\bibitem[Harris et~al.(2003)Harris, Csicsvari, Hirase, Dragoi, and Buzs{\'a}ki]{harris2003organization}
Kenneth~D Harris, Jozsef Csicsvari, Hajime Hirase, George Dragoi, and Gy{\"o}rgy Buzs{\'a}ki.
\newblock Organization of cell assemblies in the hippocampus.
\newblock \emph{Nature}, 424\penalty0 (6948):\penalty0 552--556, 2003.

\bibitem[Hebb(1949)]{hebb1949organization}
Donald~Olding Hebb.
\newblock \emph{The organization of behavior. A neuropsychological theory}.
\newblock John Wiley, 1949.

\bibitem[Huyck and Passmore(2013)]{huyck2013review}
Christian~R Huyck and Peter~J Passmore.
\newblock A review of cell assemblies.
\newblock \emph{Biological cybernetics}, 107:\penalty0 263--288, 2013.

\bibitem[Ikegaya et~al.(2004)Ikegaya, Aaron, Cossart, Aronov, Lampl, Ferster, and Yuste]{ikegaya2004synfire}
Yuji Ikegaya, Gloster Aaron, Rosa Cossart, Dmitriy Aronov, Ilan Lampl, David Ferster, and Rafael Yuste.
\newblock Synfire chains and cortical songs: temporal modules of cortical activity.
\newblock \emph{Science}, 304\penalty0 (5670):\penalty0 559--564, 2004.

\bibitem[Jinno et~al.(2007)Jinno, Klausberger, Marton, Dalezios, Roberts, Fuentealba, Bushong, Henze, Buzs{\'a}ki, and Somogyi]{jinno2007neuronal}
Shozo Jinno, Thomas Klausberger, Laszlo~F Marton, Yannis Dalezios, J~David~B Roberts, Pablo Fuentealba, Eric~A Bushong, Darrell Henze, Gy{\"o}rgy Buzs{\'a}ki, and Peter Somogyi.
\newblock Neuronal diversity in gabaergic long-range projections from the hippocampus.
\newblock \emph{Journal of Neuroscience}, 27\penalty0 (33):\penalty0 8790--8804, 2007.

\bibitem[Lillicrap et~al.(2016)Lillicrap, Cownden, Tweed, and Akerman]{lillicrap2016random}
Timothy~P Lillicrap, Daniel Cownden, Douglas~B Tweed, and Colin~J Akerman.
\newblock Random synaptic feedback weights support error backpropagation for deep learning.
\newblock \emph{Nature communications}, 7\penalty0 (1):\penalty0 13276, 2016.

\bibitem[Lillicrap et~al.(2020)Lillicrap, Santoro, Marris, Akerman, and Hinton]{lillicrap2020backpropagation}
Timothy~P Lillicrap, Adam Santoro, Luke Marris, Colin~J Akerman, and Geoffrey Hinton.
\newblock Backpropagation and the brain.
\newblock \emph{Nature Reviews Neuroscience}, 21\penalty0 (6):\penalty0 335--346, 2020.

\bibitem[Maguire et~al.(2003)Maguire, Valentine, Wilding, and Kapur]{maguire2003routes}
Eleanor~A Maguire, Elizabeth~R Valentine, John~M Wilding, and Narinder Kapur.
\newblock Routes to remembering: the brains behind superior memory.
\newblock \emph{Nature neuroscience}, 6\penalty0 (1):\penalty0 90--95, 2003.

\bibitem[Mitropolsky et~al.(2021)Mitropolsky, Collins, and Papadimitriou]{mitropolsky2021biologically}
Daniel Mitropolsky, Michael~J Collins, and Christos~H Papadimitriou.
\newblock A biologically plausible parser.
\newblock \emph{Transactions of the Association for Computational Linguistics}, 9:\penalty0 1374--1388, 2021.

\bibitem[Mitropolsky et~al.(2022)Mitropolsky, Ejaz, Shi, Yannakakis, and Papadimitriou]{mitropolsky2022center}
Daniel Mitropolsky, Adiba Ejaz, Mirah Shi, Mihalis Yannakakis, and Christos~H Papadimitriou.
\newblock Center-embedding and constituency in the brain and a new characterization of context-free languages.
\newblock \emph{arXiv preprint arXiv:2206.13217}, 2022.

\bibitem[Papadimitriou and Vempala(2019)]{papadimitriou2019random}
Christos~H Papadimitriou and Santosh~S Vempala.
\newblock Random projection in the brain and computation with assemblies of neurons.
\newblock In \emph{10th Innovations in Theoretical Computer Science Conference}, 2019.

\bibitem[Papadimitriou et~al.(2020)Papadimitriou, Vempala, Mitropolsky, Collins, and Maass]{papadimitriou2020brain}
Christos~H Papadimitriou, Santosh~S Vempala, Daniel Mitropolsky, Michael Collins, and Wolfgang Maass.
\newblock Brain computation by assemblies of neurons.
\newblock \emph{Proceedings of the National Academy of Sciences}, 117\penalty0 (25):\penalty0 14464--14472, 2020.

\bibitem[Pastalkova et~al.(2008)Pastalkova, Itskov, Amarasingham, and Buzsaki]{pastalkova2008internally}
Eva Pastalkova, Vladimir Itskov, Asohan Amarasingham, and Gyorgy Buzsaki.
\newblock Internally generated cell assembly sequences in the rat hippocampus.
\newblock \emph{Science}, 321\penalty0 (5894):\penalty0 1322--1327, 2008.

\bibitem[Roux and Buzs{\'a}ki(2015)]{roux2015tasks}
Lisa Roux and Gy{\"o}rgy Buzs{\'a}ki.
\newblock Tasks for inhibitory interneurons in intact brain circuits.
\newblock \emph{Neuropharmacology}, 88:\penalty0 10--23, 2015.

\bibitem[Sacramento et~al.(2017)Sacramento, Costa, Bengio, and Senn]{sacramento2017dendritic}
Joao Sacramento, Rui~Ponte Costa, Yoshua Bengio, and Walter Senn.
\newblock Dendritic error backpropagation in deep cortical microcircuits.
\newblock \emph{arXiv preprint arXiv:1801.00062}, 2017.

\bibitem[Sacramento et~al.(2018)Sacramento, Ponte~Costa, Bengio, and Senn]{sacramento2018dendritic}
Jo{\~a}o Sacramento, Rui Ponte~Costa, Yoshua Bengio, and Walter Senn.
\newblock Dendritic cortical microcircuits approximate the backpropagation algorithm.
\newblock \emph{Advances in neural information processing systems}, 31, 2018.

\bibitem[Sik et~al.(1995)Sik, Penttonen, Ylinen, and Buzs{\'a}ki]{sik1995hippocampal}
A~Sik, M~Penttonen, A~Ylinen, and Gy~Buzs{\'a}ki.
\newblock Hippocampal ca1 interneurons: an in vivo intracellular labeling study.
\newblock \emph{Journal of Neuroscience}, 15\penalty0 (10):\penalty0 6651--6665, 1995.

\bibitem[Sipser(1996)]{sipser1996introduction}
Michael Sipser.
\newblock Introduction to the theory of computation.
\newblock \emph{ACM Sigact News}, 27\penalty0 (1):\penalty0 27--29, 1996.

\bibitem[Sugar and Moser(2019)]{sugar2019episodic}
J{\o}rgen Sugar and May-Britt Moser.
\newblock Episodic memory: Neuronal codes for what, where, and when.
\newblock \emph{Hippocampus}, 29\penalty0 (12):\penalty0 1190--1205, 2019.

\bibitem[Valiant(2000{\natexlab{a}})]{valiant2000circuits}
Leslie~G Valiant.
\newblock \emph{Circuits of the Mind}.
\newblock Oxford University Press on Demand, 2000{\natexlab{a}}.

\bibitem[Valiant(2000{\natexlab{b}})]{valiant2000neuroidal}
Leslie~G Valiant.
\newblock A neuroidal architecture for cognitive computation.
\newblock \emph{Journal of the ACM (JACM)}, 47\penalty0 (5):\penalty0 854--882, 2000{\natexlab{b}}.

\bibitem[Whittington and Bogacz(2019)]{whittington2019theories}
James~CR Whittington and Rafal Bogacz.
\newblock Theories of error back-propagation in the brain.
\newblock \emph{Trends in cognitive sciences}, 23\penalty0 (3):\penalty0 235--250, 2019.

\bibitem[Whittington et~al.(2020)Whittington, Muller, Mark, Chen, Barry, Burgess, and Behrens]{whittington2020tolman}
James~CR Whittington, Timothy~H Muller, Shirley Mark, Guifen Chen, Caswell Barry, Neil Burgess, and Timothy~EJ Behrens.
\newblock The tolman-eichenbaum machine: unifying space and relational memory through generalization in the hippocampal formation.
\newblock \emph{Cell}, 183\penalty0 (5):\penalty0 1249--1263, 2020.

\bibitem[Yuste(2015)]{yuste2015neuron}
Rafael Yuste.
\newblock From the neuron doctrine to neural networks.
\newblock \emph{Nature reviews neuroscience}, 16\penalty0 (8):\penalty0 487--497, 2015.

\bibitem[Zhang et~al.(2014)Zhang, Xu, Kamigaki, Hoang~Do, Chang, Jenvay, Miyamichi, Luo, and Dan]{zhang2014long}
Siyu Zhang, Min Xu, Tsukasa Kamigaki, Johnny~Phong Hoang~Do, Wei-Cheng Chang, Sean Jenvay, Kazunari Miyamichi, Liqun Luo, and Yang Dan.
\newblock Long-range and local circuits for top-down modulation of visual cortex processing.
\newblock \emph{Science}, 345\penalty0 (6197):\penalty0 660--665, 2014.

\bibitem[Zhou and Hablitz(1998)]{zhou1998ampa}
Fu-Ming Zhou and John~J Hablitz.
\newblock Ampa receptor-mediated epscs in rat neocortical layer ii/iii interneurons have rapid kinetics.
\newblock \emph{Brain research}, 780\penalty0 (1):\penalty0 166--169, 1998.

\end{thebibliography}


\end{document}